\title[Bounded Robustness in Reinforcement Learning]{Bounded Robustness in Reinforcement Learning \\
via Lexicographic Objectives}
\DeclareMathOperator*{\argmax}{argmax}
\DeclareMathOperator*{\argmin}{argmin}
\newtheorem{problem}{Problem}
\newtheorem{assumption}{Assumption}
\author{%
 \Name{Daniel Jarne~Ornia} \Email{d.jarneornia@tudelft.nl}\\
 \addr Delft University of Technology
 \AND
 \Name{Licio Romao} \Email{licio.romao@cs.ox.ac.uk}\\
 \addr University of Oxford%
  \AND
 \Name{Lewis Hammond} \Email{lewis.hammond@cs.ox.ac.uk}\\
 \addr University of Oxford%
  \AND
 \Name{Manuel Mazo~Jr.} \Email{m.mazo@tudelft.nl}\\
 \addr Delft University of Technology%
  \AND
 \Name{Alessandro Abate} \Email{alessandro.abate@cs.ox.ac.uk}\\
 \addr University of Oxford
}
\begin{document}
\maketitle
\begin{abstract}
Policy robustness in Reinforcement Learning may not be desirable at any cost: the alterations caused by robustness requirements from otherwise optimal policies should be explainable, quantifiable and formally verifiable. In this work we study how policies can be \emph{maximally robust} to arbitrary observational noise by analysing how they are altered by this noise through a stochastic linear operator interpretation of the disturbances, and establish connections between robustness and properties of the noise kernel and of the underlying MDPs. Then, we construct sufficient conditions for policy robustness, and propose a robustness-inducing scheme, applicable to any policy gradient algorithm, that formally trades off expected policy utility for robustness through \emph{lexicographic optimisation}, while preserving convergence and sub-optimality in the policy synthesis.
\end{abstract}
\thispagestyle{plain}
\section{Introduction}
Consider a dynamical system where we need to synthesise a controller (policy) through a model-free Reinfrocement Learning \citep{sutton2018reinforcement} approach. When using a simulator for training we expect the deployment of the controller in the real system to be affected by different sources of noise, possibly not predictable or modelled (\emph{e.g.} for networked components we may have sensor faults, communication delays, \emph{etc}). In safety-critical systems, robustness (in terms of successfully controlling the system under disturbances) should preserve formal guarantees, and plenty of effort has been put on developing formal convergence guarantees on policy gradient algorithms \citep{agarwal2021theory,bhandari2019global}. All these guarantees vanish under regularization and adversarial approaches, which are aimed to produce more robust policies. Therefore, for such applications one needs a scheme to regulate the robustness-utility trade-off in RL policies, that on the one hand  preserves the formal guarantees of the original algorithms, and on the other attains sub-optimality conditions from the original problem. Additionally, if we do not know the structure of the disturbance (which holds in most applications), learning directly a policy for an arbitrarily disturbed environment will yield unexpected behaviours when deployed in the true system. 
\paragraph{Lexicographic Reinforcement Learning (LRL)} Recently, lexicographic optimisation \citep{isermann1982linear,rentmeesters1996theory} has been applied to the multi-objective RL setting \citep{Skalse2022}. In an LRL setting some objectives may be more important than others, and so we may want to obtain policies that solve the multi-objective problem in a lexicographically prioritised way, \emph{i.e.}, ``find the policies that optimise objective $i$ (reasonably well), and from those the ones that optimise objective $i+1$ (reasonably well), and so on''.
\paragraph{Previous Work} In robustness against \emph{model uncertainty}, the MDP may have noisy or uncertain reward signals or transition probabilities, as well as possible resulting \emph{distributional shifts} in the training data \citep{heger1994consideration,xu2006robustness,fu2018learning,10.5555/3237383.3238064,pirotta2013safe,abdullah2019wasserstein}, connecting to ideas on distributionally robust optimisation \citep{wiesemann2014distributionally,van2015distributionally}.
For
\emph{adversarial attacks or disturbances} on policies or action selection in RL agents \citep{Gleave2020Adversarial,lin2017tactics,tessler2019action,pan2019risk,tan2020robustifying,10.5555/3306127.3331713, liang2022efficient}, recently \citet{Gleave2020Adversarial} propose to attack RL agents by swapping the policy for an adversarial one at given times. For a detailed review on Robust RL see \citet{moos2022robust}.
Our work focuses in robustness versus \emph{observational disturbances}, where agents observe a disturbed state measurement and use it as input for the policy \citep{kos2017delving,huang2017adversarial,behzadan2017vulnerability,mandlekar2017adversarially,zhang2020robust,zhang2021robust}.
\citet{zhang2020robust} propose a \emph{state-adversarial} MDP framework, and utilise adversarial regularising terms that can be added to different deep RL algorithms to make the resulting policies more robust to observational disturbances, and \citet{zhang2021robust} study how LSTM increases robustness with optimal state-perturbing adversaries.
\paragraph{Contributions} Most existing work on RL with observational disturbances proposes modifying RL algorithms at the cost of \emph{explainability} (in terms of sub-optimality bounds) and \emph{verifiability}, since the induced changes in the new policies result in a loss of convergence guarantees. Our main contributions are summarised in the following points.
\begin{itemize}
    \item We consider general unknown stochastic disturbances and formulate a quantitative definition of observational robustness that allows us to characterise the sets of robust policies for any MDP in the form of operator-invariant sets. We analyse how the structure of these sets depends on the MDP and noise kernel, and obtain an inclusion relation providing intuition into how we can search for robust policies more effectively.\footnote{There are strong connections between Sections \ref{sec:2}-\ref{sec:3} in this paper and the literature on planning for POMDPs \citep{spaan2004point, spaan2012partially} and MDP invariances \citep{ng1999policy,10.5555/3398761.3398926,skalse2022invariance}, as well as recent work concerning robustness misspecification \citep{korkmaz2023adversarial}.}
    \item We propose a meta-algorithm that can be applied to any existing policy gradient algorithm, Lexicographically Robust Policy Gradient (LRPG) that (1) Retains policy sub-optimality up to a specified tolerance while maximising robustness,
    (2) Formally controls the utility-robustness trade-off through this design tolerance, (3) Preserves formal guarantees.
\end{itemize}
Figure \ref{fig:lem1} represents a qualitative interpretation of the results in this work.

\begin{wrapfigure}{r}{0.6\textwidth}
\centering
   \resizebox{0.49\linewidth}{!}{\includegraphics{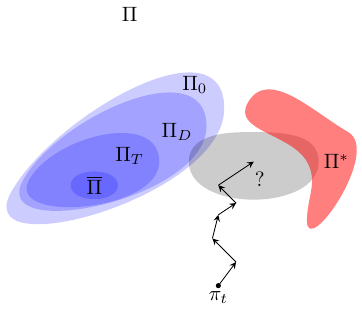}}
\resizebox{0.49\linewidth}{!}{\includegraphics{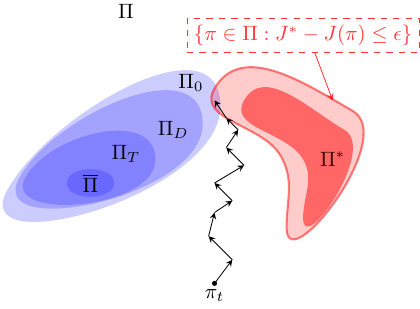}}
\caption{Qualitative representation LRPG (right), compared to usual robustness-inducing algorithms. The sets in blue are the robust policies to be defined in the coming sections. LRPG induces robustness while guaranteeing that the policies will deviate a bounded distance from the optimal.}
\label{fig:lem1}
\vspace{-20pt}
\end{wrapfigure}

\subsection{Preliminaries}
\paragraph{Notation} We use calligraphic letters $\mathcal{A}$ for collections of sets and $\Delta(\mathcal{A})$ as the space of probability measures over $\mathcal{A}$. For two probability distributions $P,P'$ defined on the same $\sigma-$algebra $\mathcal{F}$, $D_{TV}(P\Vert P')=\operatorname{sup}_{A\in \mathcal{F}}|P(A)-P'(A)|$ is the total variation distance. 
For two elements of a vector space we use $\langle\cdot,\cdot\rangle$ as the inner product. We use $\mathbf{1}_n$ as a column-vector of size $n$ that has all entries equal to 1. We say that an MDP is \emph{ergodic} if for any policy the resulting Markov Chain (MC) is ergodic. We say that $S$ is a $n\times n$ row-stochastic matrix if $S_{ij}\geq 0$ and each row of $S$ sums 1. We assume all learning rates in this work $\alpha_t(x,u)\in [0,1]$ ($\beta_t,\eta_t$...) satisfy the conditions $\sum_{t=1}^{\infty}\alpha_t(x,u)=\infty$ and $\sum_{t=1}^{\infty}\alpha_t(x,u)^2<\infty$.
\paragraph{Lexicographic Reinforcement Learning}
Consider a parameterised policy $\pi_\theta$ with $\theta \in \Theta$, and two objective functions $K_1$ and $K_2$. PB-LRL uses a multi-timescale optimisation scheme to optimise $\theta$ faster for higher-priority objectives, iteratively updating the constraints induced by these priorities and encoding them via Lagrangian relaxation techniques \citep{bertsekas1997nonlinear}. Let $\theta' \in \argmax_{\theta} K_1(\theta)$. Then, PB-LRL can be used to find parameters $\theta'' \in \{\argmax_\theta K_2(\theta), 
\, \text{s.t.} \,\, 
K_1(\theta)\geq K_1(\theta')- \epsilon\}.$ This is done through the update:
\begin{equation}\label{eq:LRLupdates}\begin{aligned}
\theta \leftarrow& \operatorname{proj}_{\Theta}\big[\theta +\nabla_\theta \hat{K}(\theta)\big],\quad  \lambda \leftarrow& \operatorname{proj}_{\mathbb{R}_{\geq 0}}\big[ \lambda + \eta_t(\hat{k}_1-\epsilon_t-K_1(\theta))\big],
\end{aligned}
\end{equation}
where $\hat{K}(\theta) := (\beta_t^1 + \lambda \beta_t^2) \cdot K_1(\theta) + \beta_t^2 \cdot K_2(\theta)$, $\lambda$ is a Langrange multiplier, $\beta_t^1,\beta_t^2,\eta_t$ are learning rates, and $\hat{k}_1$ is an estimate of $K_1(\theta')$. Typically, we set $\epsilon_t\to 0$, though we can use other tolerances too, \textit{e.g.}, $\epsilon_t = 0.9 \cdot \hat{k}_1$. For more details see \citet{Skalse2022}.
\section{Observationally Robust Reinforcement Learning}\label{sec:2}
Robustness-inducing methods in model-free RL must address the following dilemma: How do we deal with uncertainty without an explicit mechanism to estimate such uncertainty during policy execution? Consider an example of an MDP where, at policy roll-out phase, there is a non-zero probability of measuring a ``wrong'' state. In such a scenario, measuring the wrong state can lead to executing unboundedly bad actions. This problem is represented by the following version of a noise-induced partially observable Markov Decision Process \citep{spaan2012partially}.
\begin{definition}
An observationally-disturbed MDP (DOMDP) is (a POMDP) defined by the tuple $(X,U,P,R,T,\gamma)$ where $X$ is a finite set of states, $U$ is a set of actions, $P: U\times X\mapsto \Delta(X)$ is a probability measure of the transitions between states and $R:X\times U\times X\mapsto\mathbb{R}$ is a reward function. The map $T:X \mapsto \Delta(X)$ is a stochastic kernel induced by some unknown noise signal, such that $T(y\mid x)$ is the probability of measuring $y$ while the true state is $x$, and acts only on the state observations. At last $\gamma\in[0,1]$ is a reward discount.
\label{def:DOMDP}
\end{definition}
A (memoryless) policy for the agent is a stochastic kernel $\pi: X \mapsto \Delta(U)$. For simplicity, we overload notation on $\pi$, denoting by $\pi(x,u)$ as the probability of taking action $u$ at state $x$.
In a DOMDP\footnote{Definition \ref{def:DOMDP} is a generalised form of the State-Adversarial MDP used by \citet{zhang2020robust}: the adversarial case is a particular form of DOMDP where $T$ assigns probability 1 to one adversarial state.} agents can measure the full state, but the measurement will be disturbed by some  unknown random signal \emph{in the policy deployment}. 
The difficulty of acting in such DOMDP is that agents will have to act based on disturbed states $\tilde{x}\sim T(\cdot \mid x)$. We then need to construct policies that will be as robust as possible against such noise \emph{without the existance of a model to estimate, filter or reject disturbances}.  
The value function of a policy $\pi$ (\emph{critic}), $V^{\pi}:X\mapsto\mathbb{R}$, is given by $V^{\pi}(x_0) = \mathbb{E}[\sum_{t=0}^{\infty}\gamma^t R(x_t,\pi(x_t),x_{t+1}) ]$. The action-value function of $\pi$ ($Q$-function) is given by $Q^{\pi}(x,u)=\sum_{y\in X}P(x,u,y)(R(x,u,y)+\gamma V^{\pi}(y)).$ 
We then define the objective function as $J(\pi):=\mathbb{E}_{x_0\sim \mu_0}[V^\pi(x_0)]$ with $\mu_0$ being a distribution of initial states, and we use $J^*:=\max_{\pi}J(\pi)$ and $\pi^*$ as the optimal policy, and $\Pi^*_{\epsilon}:=\{\pi\in\Pi: J^* -J(\pi)\leq \epsilon\}$ is the set of $\epsilon$-optimal policies. If a policy is parameterised by $\theta \in \Theta$ we write $\pi_\theta$ and $J(\theta)$.
\begin{assumption}\label{as:2}
    For any DOMDP and policy $\pi$, the resulting MC is irreducible and aperiodic.
\end{assumption}
We now formalise a notion of \emph{observational robustness}. Firstly, due to the presence of the stochastic kernel $T$, the policy we are applying is altered as we are applying a collection of actions in a possibly wrong state. Then,
$\langle \pi, T \rangle (x,u) := \sum_{y\in X} T(y \mid x)\pi(y,u),$
where $\langle \pi, T \rangle: X \mapsto \Delta(U)$ is the \emph{disturbed} policy, which averages the current policy given the error induced by the presence of the stochastic kernel. Notice that $\langle \cdot,T\rangle(x):\Pi \mapsto\Delta(U)$ is an averaging operator yielding the alteration of the policy due to noise. We define the \emph{robustness regret}\footnote{The robustness regret satisfies  $\rho(\pi^*,T)\geq 0$ for all kernels $T$, and it allows us to directly compare the robustness regret with the utility regret of the policy.}:
$
\rho(\pi,T) := J(\pi) - J(\langle \pi, T \rangle).
$
\begin{definition}[Policy Robustness]\label{def:rob}
A policy $\pi$ is $\kappa$-\emph{robust} against a stochastic kernel $T$ if
$\rho(\pi,T)\leq \kappa$. If $\pi$ is $0$-\emph{robust} it is {maximally robust}. The sets of $\kappa$-robust policies are $\Pi_{\kappa}:=\{\pi\in\Pi:\rho(\pi,T)\leq \kappa\}$, with $\Pi_0$ being the maximally robust policies.
\end{definition}
One can motivate the characterisation and models above from a control perspective, where policies use as input discretised state measurements with possible sensor measurement errors.
Formally ensuring robustness properties when learning RL policies will, in general, force the resulting policies to deviate from optimality in the undisturbed MDP. We propose then the following problem.
\begin{problem}\label{prob:1}
Consider a DOMDP model as per Definition \ref{def:DOMDP} and let $\epsilon$ be a non-negative tolerance level. Our goal is to find amongst all $\epsilon$-optimal policies those that minimize the robustness level $\kappa$:
\[ \operatorname{minimize} \,\kappa \,\,\, s.t.\, \pi \in \Pi^\star_\epsilon \cap \Pi_{\kappa}.\]
\end{problem}
Note that this is formulated as general as possible with respect to the robustness of the policies: We would like to find a policy that, trading off $\epsilon$ in terms of cumulative rewards, observes the same discounted rewards when disturbed by $T$.
\section{Characterisation of Robust Policies}\label{sec:3}
An important question to be addressed before trying to synthesise robust policies is what these robust policies look like, and how they are related to DOMDP properties. 
A policy $\pi$ is said to be constant if $\pi(x) = \pi(y)$ for all $x, y \in X$, and the collection of all constant policies is denoted by $\bar{\Pi}$. A policy is called a fixed point of $\langle \cdot, T \rangle$ if $\pi(x) = \langle \pi, T \rangle (x) $ for all $x \in X$. The collection of all fixed points is $\Pi_T$.
Observe furthermore that $\Pi_T$ \emph{only depends on the kernel ${T}$} and the set\footnote{There is a (natural) bijection between the set of constant policies and the space $\Delta(U)$. The set of fixed points of the operator $\langle \cdot, T \rangle$ also has an algebraic characterisation in terms of the null space of the operator $\mathrm{Id}(\cdot) - \langle \cdot, T \rangle$. We are not exploiting the later characterisation in this paper.} $X$.
Let us assume we have a policy iteration algorithm that employs an action-value function $Q^\pi$ and policy $\pi$. The advantage function for $\pi$ is defined as $A^\pi (x,u) := Q^\pi (x,u)-V^\pi (x)$. We can similarly define the \emph{noise disadvantage} of policy $\pi$ as:
\begin{equation}\label{eq:valdis}\begin{aligned}
D^\pi(x,T) := V^\pi(x) - \mathbb{E}_{\substack{u \sim \langle \pi, T \rangle (x)}} [ Q^\pi(x,u)],
\end{aligned}
\end{equation}
which measures the difference of applying at state $x$ an action according to the policy $\pi$ with that of playing an action according to $\langle \pi, T \rangle$ and then continuing playing an action according to $\pi$. Our intuition says that if it happens to be the case that $D^\pi(x,T) = 0$ for all states in the DOMDP, then such a policy is maximally robust. And this is indeed the case, as shown in the next proposition.
\begin{proposition}\label{prop:InclusionDisadvantageMaxRobust}
Consider a DOMDP as in Definition \ref{def:DOMDP} and the robustness notion as in Definition \ref{def:rob}. If a policy $\pi$ is such that $D^\pi(x,T)=0 \,\,$ for all $x\in X$, then $\pi$ is maximally robust, i.e., let $\Pi_{D}:=\{\pi\in\Pi:\mu_{\pi} (x)D^\pi(x,T)=0\,\forall\,x\in X\}, $
then we have that $\Pi_D \subseteq \Pi_0$.
\begin{proof}
We want to show that $D^\pi(x,T)=0\implies \rho(\pi,T)=0$. Taking $D^\pi(x,T)=0$ one has a policy that produces an disadvantage of zero when noise kernel $T$ is applied. Then, $\forall \,x\in X,$
\begin{equation}\label{eq:qtilde}
D^\pi(x,T)=0\implies \mathbb{E}_{\substack{u \sim \langle \pi, T \rangle (x)}} [ Q^\pi(x,u)]=V^{\pi}(x).
\end{equation}
Now define the value of the disturbed policy as $V^{\langle \pi, T \rangle}(x)=\mathbb{E}_{\substack{u \sim \langle \pi, T \rangle(x), \\ y \sim P(\cdot \mid x,u)}} \left[ r(x,u,y)+\gamma V^{\langle \pi, T \rangle}(y) \right].$
We will now show that $V^{\pi}(x) = V^{\langle \pi, T \rangle}(x),$ for all $x \in X$. Observe, from \eqref{eq:qtilde} using $V^{\pi}(x)=\mathbb{E}_{\substack{u \sim \langle \pi, T \rangle (x)}} [ Q^\pi(x,u)]$, we have $\forall x\in X$:
\begin{equation}\label{eq:recursive}
    \begin{aligned}
    V^{\pi}(x)-V^{\langle \pi, T \rangle}(x)
    = \mathbb{E}_{\substack{u \sim \langle \pi, T \rangle (x)}} [ Q^\pi(x,u)]-\mathbb{E}_{\substack{u \sim \langle \pi, T \rangle(x) \\ y \sim P(\cdot \mid x,u)}} \left[ r(x,u,y)+\gamma V^{\langle \pi, T \rangle}(y) \right]=\\
    = \mathbb{E}_{\substack{u \sim \langle \pi, T \rangle(x) \\ y \sim P(\cdot \mid x,u)}} \left[\gamma V^{ \pi}(y)-\gamma V^{\langle \pi, T \rangle}(y)\right]= \gamma \mathbb{E}_{y \sim P(\cdot \mid x,u)} \left[ V^{ \pi}(y)- V^{\langle \pi, T \rangle}(y)\right].
    \end{aligned}
\end{equation}
Now, taking the sup norm at both sides of \eqref{eq:recursive} we get
\begin{equation}\label{eq:recursive2}\begin{aligned}
\|V^{\pi}(x)-V^{\langle \pi, T \rangle}(x)\|_\infty =\gamma \left\lVert \mathbb{E}_{y \sim P(\cdot \mid x,u)} \left[ V^{ \pi}(y)- V^{\langle \pi, T \rangle}(y)\right]\right\rVert_\infty.
\end{aligned}
\end{equation}
Since the norm on the right hand side of \eqref{eq:recursive2} is over $y\in X$ and $\gamma<1$, it follows that $\|V^{\pi}(x)-V^{\langle \pi, T \rangle}(x)\|_\infty=0$.
Finally, $\|V^{\pi}(x)-V^{\langle \pi, T \rangle}(x)\|_\infty=0\implies V^{\pi}(x)-V^{\langle \pi, T \rangle}(x)=0\,\,\forall x\in X$, and $V^{\pi}(x)-V^{\langle \pi, T \rangle}(x)=0 \, \forall \,x\in X \implies J(\pi)=J(\langle \pi, T \rangle)\implies \rho(\pi,T)=0$.
\end{proof}
\end{proposition}
So far we have shown that both the set of fixed points $\overline{\Pi}$ and the set of policies for which the disadvantage function is equal to zero $\Pi_{D}$ are contained in the set of maximally robust policies. 
We now show how the defined robust policy sets can be linked in a single result through the following policy inclusions.
\begin{theorem}[Policy Inclusions]\label{theo:InclusionTheorem}
For a DOMDP with noise kernel $T$, consider the sets $\overline{\Pi},\Pi_T,\Pi_{D}$ and $\Pi_0$. Then, the following inclusion relation holds: 
$\overline{\Pi}\subseteq\Pi_{T}\subseteq \Pi_{D} \subseteq \Pi_0.$
Additionally, the sets $\overline{\Pi}, \Pi_{T}$ are \emph{convex} for all MDPs and kernels $T$, but $\Pi_{D}, \Pi_0$ may not be.
\end{theorem}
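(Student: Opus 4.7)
The plan is to chain the inclusions by combining the two previous propositions with one new middle inclusion, and then handle convexity by separating the linear/affine structure of $\overline{\Pi}$ and $\Pi_T$ from the nonlinear dependence of $\Pi_D$ and $\Pi_0$ on $\pi$ through the value function.

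First I would invoke Proposition \ref{prop:InclusionConstFixPolicies} to get $\overline{\Pi} \subseteq \Pi_T$ for free, and Proposition \ref{prop:InclusionDisadvantageMaxRobust} for $\Pi_D \subseteq \Pi_0$. The only genuinely new chain link is $\Pi_T \subseteq \Pi_D$. Take $\pi \in \Pi_T$, so $\langle \pi, T \rangle(x) = \pi(x)$ for every $x \in X$. Then for every $x$
\begin{equation*}
\mathbb{E}_{u \sim \langle \pi, T \rangle(x)}[Q^\pi(x,u)] = \mathbb{E}_{u \sim \pi(x)}[Q^\pi(x,u)] = V^\pi(x),
\end{equation*}
where the last equality is the standard identity $V^\pi(x)=\sum_u \pi(x,u) Q^\pi(x,u)$. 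Substituting into \eqref{eq:valdis} gives $D^\pi(x,T)=0$ for all $x$, hence $\pi \in \Pi_D$. Concatenating the three inclusions yields the claim $\overline{\Pi}\subseteq\Pi_T\subseteq\Pi_D\subseteq\Pi_0$.

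For convexity of $\overline{\Pi}$, if $\pi_1,\pi_2$ are constant policies and $\alpha\in[0,1]$, then $(\alpha\pi_1+(1-\alpha)\pi_2)(x)$ does not depend on $x$, so the convex combination is constant. For $\Pi_T$, I would note that the operator $\langle\cdot,T\rangle$ defined in \eqref{eq:defoperator} is linear in its first argument: $\langle \alpha\pi_1+(1-\alpha)\pi_2,T\rangle=\alpha\langle\pi_1,T\rangle+(1-\alpha)\langle\pi_2,T\rangle$, so the set of its fixed points is the kernel of a linear operator on an affine subspace and is therefore convex.

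For the non-convexity claims on $\Pi_D$ and $\Pi_0$, the key observation is that both $V^\pi$ and $Q^\pi$ are nonlinear functionals of $\pi$, so neither $D^\pi(x,T)$ nor $\rho(\pi,T)$ is an affine function of $\pi$, and the sublevel set at $0$ need not be convex. I would illustrate this with a minimal counter-example: a two-state, two-action ergodic MDP where two deterministic policies $\pi_1,\pi_2$ achieve the same value $J(\pi_1)=J(\pi_2)=J^\star$ and are both maximally robust under some $T$, but such that the mixed policy $\tfrac12\pi_1+\tfrac12\pi_2$ has strictly smaller expected return than $J^\star$ (because the two strategies must be played coherently across states), giving $\rho(\tfrac12\pi_1+\tfrac12\pi_2,T)>0$ and also a nonzero disadvantage. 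The main obstacle will be verifying that this counter-example simultaneously exhibits both $\pi_1,\pi_2 \in \Pi_0$ and $\tfrac12\pi_1+\tfrac12\pi_2 \notin \Pi_D$; it is cleanest to pick $T$ that swaps states with nontrivial probability, so that $\langle \tfrac12\pi_1+\tfrac12\pi_2,T\rangle$ produces action mixtures at each state that were never produced by either $\pi_i$, creating a strictly positive gap in \eqref{eq:valdis}.
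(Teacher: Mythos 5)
Your proposal follows essentially the same route as the paper's proof: the first and last inclusions are imported from Propositions \ref{prop:InclusionConstFixPolicies} and \ref{prop:InclusionDisadvantageMaxRobust}, the new link $\Pi_T \subseteq \Pi_D$ is obtained exactly as in the paper by substituting $\langle \pi, T\rangle = \pi$ into \eqref{eq:valdis} and using $V^\pi(x) = \mathbb{E}_{u\sim\pi(x)}[Q^\pi(x,u)]$, and convexity of $\overline{\Pi}$ and $\Pi_T$ follows from the same constancy and linearity arguments. The one place you go beyond the paper is the non-convexity claim for $\Pi_D$ and $\Pi_0$: the paper only asserts these sets ``may not be'' convex and offers no counterexample, whereas you sketch one. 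That sketch is not yet watertight --- note that $J(\tfrac12\pi_1+\tfrac12\pi_2) < J^\star$ does not by itself imply $\rho(\tfrac12\pi_1+\tfrac12\pi_2,T) > 0$, since robustness regret measures the gap between the mixture and its \emph{disturbed} version, not its distance from optimality; your closing remark about choosing $T$ to swap states points at the right mechanism, but the example would need to be verified explicitly before the non-convexity statement could be claimed as proved rather than merely plausible.
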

\begin{proof}
If a policy $\pi\in\Pi$ is a fixed point of the operator $ \langle \cdot,T\rangle$, then $\rho(\pi,T)= J(\pi)-J(\langle \pi,T\rangle)=J(\pi)-J(\pi)=0 \implies \pi\in\Pi_0$. Therefore, $\Pi_T\subseteq \Pi_0$. Now, the space of stochastic kernels $\mathcal{K}:X\mapsto\Delta(X)$ is equivalent to the space of row-stochastic $|X|\times|X|$ matrices, therefore one can write $T(y\mid x) \equiv T_{xy}$ as the $xy-$th entry of the matrix $T$. Then, the representation of a constant policy as an $X\times U$ matrix can be written as $\overline{\pi}=\mathbf{1}_{|X|}v^\top$, where $\mathbf{1}_{|X|}$ where $v\in \Delta(U)$ is any probability distribution over the action space. Observe that, applying the operator $\langle \pi,T\rangle$ to a constant policy yields $\langle \overline{\pi},T\rangle=T\mathbf{1}_{|X|}v^\top$.
By the Perron-Frobenius Theorem \citep{horn2012matrix}, since $T$ is row-stochastic it has at least one eigenvalue $\operatorname{eig}(T)=1$, and this admits a (strictly positive) eigenvector $T\mathbf{1}_{|X|}=\mathbf{1}_{|X|}$. Therefore, 
$\langle \overline{\pi},T\rangle=T\mathbf{1}_{|X|}v^\top=\mathbf{1}_{|X|}v^\top = \overline{\pi}\implies \overline{\Pi}\subseteq \Pi_T.$ Combining this result with Proposition \ref{prop:InclusionDisadvantageMaxRobust}, we simply need to show that $\Pi_T\subset \Pi_D$. Take $\pi$ to be a fixed point of $\langle \pi,T\rangle$. Then $\langle \pi,T\rangle=\pi$, and from the definition in \eqref{eq:valdis}:
\begin{equation*}\begin{aligned}
D^\pi(x,T)
= V^{\pi}(x)-\mathbb{E}_{\substack{u \sim \langle \pi, T \rangle (x,\cdot)}} [ Q^\pi(x,u)]= V^{\pi}(x)-\mathbb{E}_{\substack{u \sim \pi (x,\cdot)}} [ Q^\pi(x,u)]=0.
\end{aligned}
\end{equation*}
Therefore, $\pi\in\Pi_D$, which completes the sequence of inclusions. Convexity of $\overline{\Pi}, \Pi_{T}$ follows from considering the convex hulls of two constant or fixed point policies.
\end{proof}

Let us reflect on the inclusion relations of Theorem \ref{theo:InclusionTheorem}. The inclusions are in general not strict, and in fact the geometry of the sets (as well as whether some of the relations are in fact equalities) is highly dependent on the reward function, and in particular on the complexity (from an information-theoretic perspective) of the reward function. As an intuition, less complex reward functions (more uniform) will make the inclusions above expand to the entire policy set, and more complex reward functions will make the relations collapse to equalities.

\begin{corollary}\label{cor:1}
For any \emph{ergodic} DOMDP there exist reward functions $\overline{R}$ and $\underline{R}$ such that the resulting DOMDP satisfies A) $\Pi_D = \Pi_0 = \Pi$ (any policy is max. robust) if $R = \overline{R}$, and B) $\Pi_T=\Pi_D = \Pi_0$ (only fixed point policies are maximally robust) if $R = \underline{R}$.
\end{corollary}
\begin{proof}[Corollary \ref{cor:1}]
    For statement A) let $\overline{R}(\cdot,\cdot,\cdot)=c$ for some constant $c\in\mathbb{R}$. Then, $J(\pi)=\mathbb{E}_{x_0\sim \mu_0}[\sum_{t} \gamma^t \overline{r}_t \mid \pi]= \frac{c \gamma}{1-\gamma}$, which does not depend on the policy $\pi$. For any noise kernel $T$ and policy $\pi$, $J(\pi)-J\langle \pi,T\rangle=0 \implies \pi\in \Pi_0$. For statement B assume $\exists \pi\in\Pi_0:\pi\notin\Pi_{T}$. Then, $\exists x^* \in X$ and $u^* \in U$ such that $\pi(x^*, u^*) \neq \langle \pi,T\rangle(x^*, u^*)$. Let 
    $\underline{R}(x,u,x') := c $ if $x = x^* $ and $ u = u^*$, 0 otherwise. Then, $\mathbb{E}[R(x,\pi(x),x']<\mathbb{E}[R(x,\langle \pi,T\rangle(x),x']$ and since the MDP is ergodic $x$ is visited infinitely often and $J(\pi)-J(\langle \pi,T\rangle) >0 \implies \pi\notin \Pi_0$, which contradicts the assumption. Therefore, $\Pi_0\setminus \Pi_{T}=\emptyset \implies \Pi_0=\Pi_T$.
    \end{proof}

We can now summarise the insights from Theorem \ref{theo:InclusionTheorem} and Corollary \ref{cor:1} in the following conclusions: (1) The set $\overline{\Pi}$ is maximally robust, convex and \emph{independent of the DOMDP}, (2) The set $\Pi_{T}$ is maximally robust, convex, includes $\overline{\Pi}$, and its properties \emph{only depend} on $T$, (3) The set $\Pi_{D}$ includes $\Pi_{T}$ and is maximally robust, but its properties \emph{depend on the DOMDP}.

\section{Robustness through Lexicographic Objectives}
\label{sec:lexicographicOpt}
To be able to apply LRL results to our robustness problem we need to first cast robustness as a valid objective to be maximised, and then show that a stochastic gradient descent approach would indeed find a global maximum of the objective, therefore yielding a maximally robust policy. \footnote{The advantage of using LRL is that we can formally bound the trade-off between \emph{robustness and optimality} through $\epsilon$, determinining how far we allow our resulting policy to be from an optimal policy in favour of it being more robust.}
\begin{wrapfigure}{L}{0.45\textwidth}
\vspace{-20pt}
\begin{minipage}{0.45\textwidth}
\begin{algorithm}[H]
\caption{LRPG}\label{alg:cap}
\begin{algorithmic}
\STATE \textbf{input} Simulator, $\tilde{T}$, $\epsilon$
\STATE initialise $\theta$, critic (if using), $\lambda$, $\{\beta_t^1,\beta_t^2,\eta\}$
\STATE set $t=0$, $x_t\sim \mu_0$
\WHILE{$t<\operatorname{max\_iterations}$}
\STATE perform $u_t\sim \pi_{\theta}(x_t)$
\STATE observe $r_t$, $x_{t+1}$, sample $y\sim \tilde{T}(\cdot\mid x)$
\IF {$\hat{K}_1(\theta)$ not converged}
\STATE $\hat{k}_1\leftarrow \hat{K}_1(\theta)$
\ENDIF
\STATE update critic (if using)
\STATE update $\theta$ using \eqref{eq:lrpgiter} and $\lambda$ using \eqref{eq:LRLupdates}
\ENDWHILE
\STATE \textbf{output} $\theta$
\end{algorithmic}
\end{algorithm}
\end{minipage}
\vspace{-20pt}
\end{wrapfigure}
\paragraph{Proposed approach} Following the framework presented in previous sections, we propose the following approach to obtain lexicographic robustness. In the introduction, we emphasised that the motivation for this work comes partially from the fact that we may not know $T$ in reality, or have a way to estimate it. However, the theoretical results until now depend on $T$. Our proposed solution to this lies in the results of Theorem \ref{theo:InclusionTheorem}. We can use a \emph{design} generator $\tilde{T}$ to perturb the policy during training such that $\tilde{T}$ has the \emph{smallest possible fixed point set} (i.e. the constant policy set, $\tilde{T}$ satisfies $\Pi_{\tilde{T}}=\overline{\Pi}$), and any algorithm that drives the policy towards the set of fixed points of $\tilde{T}$ \emph{will also drive the policy towards fixed points of $T$}: from Theorem \ref{theo:InclusionTheorem}, $\Pi_{\tilde{T}}\subseteq \Pi_{T}$.
\subsection{Lexicographically Robust Policy Gradient}
Consider then the objective to be minimised:
\begin{equation}
    K_{\tilde{T}}(\theta) =  \frac{1}{2}\sum_{x\in X}\mu_{\pi_\theta}(x) \sum_{u\in U} \big(\pi_{\theta}(x,u)-\langle \pi_{\theta},\tilde{T}\rangle (x,u)\big)^2, 
    \label{eq:ProxyOptimisationPolicies}
\end{equation}
Notice that optimising \eqref{eq:ProxyOptimisationPolicies} projects the current policy onto the set of fixed points of the operator $\langle \cdot, \tilde{T} \rangle$, and due to Assumption \ref{as:2}, which requires $\mu_{\pi_\theta}(x) > 0$ for all $x \in X$, the optimal solution is equal to zero if and only if there exists a value of the parameter $\theta$ for which the corresponding $\pi_\theta$ is a fixed point of $\langle \cdot, \tilde{T} \rangle$. We present now the proposed LRPG meta-algorithm to achieve lexicographic robustness for any policy gradient algorithm at choice.
From \cite{Skalse2022}, the convergence of PB-LRL algorithms is guaranteed as long as the original policy gradient algorithm for each single objective converges.
\begin{assumption}\label{as:5}
    The policy is updated through an algorithm (e.g. A2C, PPO...) such that $\theta_{t+1} \leftarrow \operatorname{proj}_{\Theta}\big[\theta_{t} + \alpha_t\nabla_{\theta_t}\hat{K}_1\big]$ converges \emph{a.s.} to a (local or global) optimum $\theta^*$.
\end{assumption}
\begin{theorem}\label{the:2}
    Consider a DOMDP as in Definition \ref{def:DOMDP} and let $\pi_\theta$ be a parameterised policy. Take a design kernel $\tilde{T}\in\{T:\Pi_{T}=\overline{\Pi}\}$. Consider the following modified gradient for objective $K_{\tilde{T}}(\theta)(x)$ and sampled point $y\sim \tilde{T}(\cdot\mid x)$:
    \begin{equation}\begin{aligned}\label{eq:approxgrad}
        \nabla_{\theta}\hat{K}_{\tilde{T}}' (\theta)=\mathbb{E}_{x\sim\mu_{\pi_\theta}}\big[\sum_{u\in U}(\pi_{\theta}(x,u)-\pi_{\theta}(y,u))\nabla_{\theta}\pi_{\theta}(x,u)\big].
        \end{aligned}
    \end{equation}
    Given an $\epsilon > 0$, if Assumptions \ref{as:2} and \ref{as:5} hold, then the following iteration (LRPG):
    \begin{equation}\label{eq:lrpgiter}
        \theta \leftarrow \operatorname{proj}_{\Theta}\big[\theta + (\beta_t^1 + \lambda \beta_t^2) \cdot \nabla_{\theta}\hat{K}_1(\theta) + \beta_t^2 \nabla_{\theta} \hat{K}_{\tilde{T}}'(\theta)\big]
    \end{equation}
    converges \emph{a.s.} to parameters $\theta^\epsilon$ that satisfy $
    \theta^\epsilon \in \argmin_{\theta \in \Theta'} K_{\tilde{T}}(\theta)$ such that $K^*_1 \geq K_1(\theta^\epsilon) - \epsilon,$
    where $\Theta' = \Theta$ if $\theta^*$ is globally optimal and a compact local neighbourhood of $\theta^*$ otherwise.
\end{theorem}
    \begin{proof}
    To apply LRL results, we need to show that both gradient descent schemes converge (separately) to local or global maxima. Let us first show that $\theta_{t+1} = \operatorname{proj}_{\Theta}\big[\theta_t - \alpha_t \nabla_{\theta}\hat{K}_{\tilde{T}}' (\theta_t)\big]$ converges \emph{a.s.} to parameters $\tilde{\theta}$ satisfying $K_{\tilde{T}}=0$. We prove this
    making use of fixed point iterations with non-expansive operators (specifically, Theorem 4, section 10.3 in \citet{Borkar2008}). First, observe that for a tabular representation, $\pi_\theta(x,u) = \theta_{xu}$, and $\nabla_{\theta}\pi_{\theta}(x,u)$ is a vector of zeros, with value $1$ for the position $\theta_{xu}$. We can then write the SGD in terms of the policy for each state $x$, considering $\pi(x) \equiv (\theta_{xu_1},\theta_{xu_2},...,\theta_{xu_k})^T$. Let $y\sim \tilde{T}(\cdot\mid x)$. Then:
    \begin{equation*}
        \begin{aligned}
            \pi_{t+1}(x)
            = \pi_t(x) - \alpha_t \big(\pi_t(x) - \pi_t(y)\big)= \pi_t(x) - \alpha_t \Big(\pi_t(x)-\langle \pi_t,\tilde{T}\rangle(x) -\big( \pi_t(y)-\langle \pi_t,\tilde{T}\rangle(x) \big) \Big).
        \end{aligned}
    \end{equation*}
    We now need to verify that the necessary conditions for applying Theorem 4, section 10.3 in \citet{Borkar2008} hold. First, making use of the property $\|\tilde{T}\|_{\infty}=1$ for any row-stochastic matrix $\tilde{T}$, for any two policies $\pi_1,\pi_2\in \Pi$:
    \begin{equation*}\begin{aligned}
        \|\langle \pi_1,\tilde{T}\rangle-\langle \pi_2,\tilde{T}\rangle \|_{\infty}= \|\tilde{T}\pi_1- \tilde{T}\pi_2 \|_{\infty} =\|\tilde{T}(\pi_1- \pi_2) \|_{\infty}\leq \|\tilde{T}\|_{\infty}\|\pi_1- \pi_2 \|_{\infty}= \|\pi_1- \pi_2 \|_{\infty}.
        \end{aligned}
    \end{equation*}
    Therefore, the operator $\langle \cdot,\tilde{T}\rangle$ is non-expansive with respect to the sup-norm. For the final condition:
    \begin{equation*}\begin{aligned}
        \mathbb{E}_{y\sim \tilde{T}(\cdot\mid x)}\left[\pi_t(y)-\langle \pi_t,\tilde{T}\rangle(x)\mid \pi_t,\tilde{T}\right]=\sum_{y\in X}\tilde{T}(y\mid x)\pi_t(y) - \langle \pi_t,\tilde{T}\rangle(x)=0.
        \end{aligned}
    \end{equation*}
    Therefore, the difference $\pi_t(y)-\langle \pi_t,\tilde{T}\rangle(x)$ is a martingale difference for all $x$. One can then apply Theorem 4, sec. 10.3 \citep{Borkar2008} to conclude that $\pi_{t}(x)\to \tilde{\pi}(x)$ almost surely. Finally from Assumption \ref{as:2}, for any policy all states $x\in X$ are visited infinitely often, therefore $\pi_{t}(x)\to \tilde{\pi}(x) \forall x\in X\implies \pi_{t}\to \tilde{\pi} $ and $\tilde{\pi}$ satisfies $\langle \tilde{\pi},\tilde{T}\rangle = \tilde{\pi}$, and $K_{\tilde{T}}(\tilde{\pi})=0$.

    Now, from Assumption \ref{as:5}, the iteration $\theta \leftarrow \operatorname{proj}_{\Theta}\big[\theta + \alpha_t\nabla_{\theta}\hat{K}_1\big]$ converges \emph{a.s.} to a (local or global) optimum $\theta^*$. Then, both objectives are invex \cite{ben1986invexity} (either locally or globally), and any linear combination of them will also be invex (again, locally or globally). Finally, we can directly apply the results from \cite{Skalse2022}, and
    \begin{equation*}
        \theta \leftarrow \operatorname{proj}_{\Theta}\big[\theta + (\beta_t^1 + \lambda \beta_t^2) \cdot \nabla_{\theta}\hat{K}_1(\theta) + \beta_t^2 \nabla_{\theta} \hat{K}_{\tilde{T}}'(\theta)\big]
    \end{equation*}
    converges \emph{a.s.} to parameters $\theta^\epsilon$ that satisfy 
    $\theta^\epsilon \in \argmin_{\theta \in \Theta'} K_{\tilde{T}}(\theta)$ such that $K^*_1 \geq K_1(\theta^\epsilon) - \epsilon$,
    where $\Theta' = \Theta$ if $\theta^*$ is globally optimal and a compact local neighbourhood of $\theta^*$ otherwise.
\end{proof}

\begin{remark}
Observe that \eqref{eq:approxgrad} is not the true gradient of \eqref{eq:ProxyOptimisationPolicies}, and $\theta^\epsilon \in \argmin_{\theta \in \Theta'} K_{\tilde{T}}(\theta)$ if there exists a (local) minimum of $K_{\tilde{T}}$ in $\Theta^{\epsilon}:=\{\theta:K_1^*\geq K_1(\theta) - \epsilon \}$. However, from Theorem \ref{the:2} we know that the (pseudo) gradient descent scheme converges to a global minimum in the tabular case, therefore $\langle \nabla_{\theta}\hat{K}_{\tilde{T}}' (\theta),\nabla_{\theta}\hat{K}_{\tilde{T}} (\theta)\rangle< 0$ \citep{Borkar2008}, and gradient-like descent schemes will converge to (local or) global minimisers, which motivates the choice of this gradient approximation.
\end{remark}
We reflect again on Figure \ref{fig:lem1}. The main idea behind LRPG is that by formally expanding the set of acceptable policies with respect to $K_1$, we may find robust policies more effectively while guaranteeing a minimum performance in terms of expected rewards. This addresses directly the premise behind Problem \ref{prob:1}. In LRPG the first objective is still to minimise the distance $J^* - J(\pi)$ up to some tolerance. Then, from the policies that satisfy this constraint, we want to steer the learning algorithm towards a maximally robust policy, and we can do so without knowing $T$.
\section{Considerations on Noise Generators}\label{sec:choice}
A natural question emerging is how to choose $\tilde{T}$, and how the choice influences the resulting policy robustness towards any other true $T$. In general, for any arbitrary policy utility landscape in a given MDP, there is no way of bounding the distance of the resulting policies for two different noise kernels $T_1,T_2$. However, \emph{the optimality of the policy} remains bounded: Through LRPG guarantees we know that, for both cases, the utility of the resulting policy will be at most $\epsilon$ far from the optimal.
\begin{corollary}\label{lem:poldis}
Take $T$ to be any arbitrary noise kernel, and $\tilde{T}$ to satisfy $\tilde{T}\in\{T:\Pi_{T}=\overline{\Pi}\}$. Let $\pi$ be a policy resulting from a LRPG algorithm. Assume that $\min_{\pi'\in\Pi_{\tilde{T}}}D_{TV}(\pi\Vert\pi')= a$ for some $a<1$. Then, it holds for any $T$ that $\min_{\pi'\in\Pi_{{T}}}D_{TV}(\pi\Vert\pi')\leq a$.
\end{corollary}
\begin{proof}
    The proof follows by the inclusion results in Theorem \ref{theo:InclusionTheorem}. If $\Pi_{\tilde{T}}=\overline{\Pi}$, then $\Pi_{\tilde{T}}\subseteq \Pi_{T}$ for any other $T$. Then, the distance from $\pi$ to the set $\Pi_{T}$ is at most the distance to $\Pi_{\tilde{T}}$.
\end{proof}
That is, when using LRPG to obtain a robust policy $\pi$, the resulting policy is at most $a$ far from the set of fixed points (and therefore a maximally robust policy) with respect to the true $T$. This is the key argument behind our choices for $\tilde{T}$: A priori, the most sensible choice is a kernel that has no other fixed point than the set of constant policies. This fixed point condition is satisfied in the discrete state case for any $\tilde{T}$ that induces an irreducible Markov Chain, and in continuous state for any $\tilde{T}$ that satisfies a reachability condition (\emph{i.e.} for any $x_0\in X$, there exists a finite time for which the probability of reaching any ball $B\subset X$ of radius $r>0$ through a sequence $x_{t+1}=T(x_t)$ is measurable). This holds for (additive) uniform or Gaussian disturbances.

\section{Experiments}
We verify the theoretical results of LRPG in a series of experiments on discrete state/action safety-related environments \citep{gym_minigrid} (for extended experiments in continuous control tasks, hyperparameters \emph{etc.} see \href{https://arxiv.org/abs/2209.15320}{extended version}). We use A2C \citep{sutton2018reinforcement} (LR-A2C) and PPO \citep{schulman2017proximal} (LR-PPO) for our implementations of LRPG. In all cases, the lexicographic tolerance was set to $\epsilon=0.99\hat{k}_1$ to deviate as little as possible from the primary objective. We compare against the baseline algorithms and against SA-PPO \citep{zhang2020robust} which is among the most effective (adversarial) robust RL approaches in literature. We trained 10 independent agents for each algorithm, and reported scores of the median agent (as in \citet{zhang2020robust}) for 50 roll-outs. To simulate $\tilde{T}$ we disturb $x$ as $\tilde{x}=x+\xi$ for (1) a uniform bounded noise signal $\xi\sim\mathcal{U}_{[-b,b]}$ ($\tilde{T}^u$) and (2) and a Gaussian noise ($\tilde{T}^g$) such that $\xi\sim\mathcal{N}(0,0.5)$. We test the resulting policies against a noiseless environment ($\emptyset$), a kernel $T_1 = \tilde{T}^u$, a kernel $T_2 = \tilde{T}^g$ and against two different state-adversarial noise configurations ($T^2_{adv}$) as proposed by \citet{zhang2021robust} to evaluate how effective LRPG is at rejecting adversarial disturbances.

\paragraph{Robustness Results} 
We use objectives as defined in \eqref{eq:ProxyOptimisationPolicies}. Additionally, we aim to test the hypothesis: If we have an estimator for the critic $Q^{\pi}$ we can obtain robustness without inducing regularity in the policy using $D^{\pi}$, yielding a larger policy subspace to steer towards, and hopefully achieving policies closer to optimal. For this, we consider the objective $K_{D}(\theta)(x):= \frac{1}{2}\|D^{\pi_{\theta}}(x,T)\|_2^2$ by modifying A2C to retain a Q critic. We investigate the impact of LRPG PPO and A2C for discrete action-space problems on Gymnasium \citep{brockman2016openai}. \emph{Minigrid-LavaGap} (fully observable), \emph{Minigrid-LavaCrossing} (partially observable) are safe exploration tasks where the agent needs to navigate an environment with cliff-like regions. \emph{Minigrid-DynamicObstacles} (stochastic, partially observable) is a dynamic obstacle-avoidance environment. See Table \ref{tab:1}.
\begin{table*}\centering
\tiny
\tabcolsep=0.11cm
\begin{tabular}{@{}lrrrr||rrrr@{}}\toprule
& \multicolumn{4}{c}{\emph{PPO on MiniGrid Environments}} & \multicolumn{4}{c}{\emph{A2C on MiniGrid Environments}} 
\\ \cmidrule{1-9}
Noise & PPO & LR$_{\text{PPO}}$$(K^u_{T})$ & LR$_{\text{PPO}}$$(K^g_{T})$ & SA-PPO & A2C & LR$_{\text{A2C}}$$(K^u_{T})$ & LR$_{\text{A2C}}$$(K^g_{T})$ & LR$_{\text{A2C}}$$(K_D)$\\ \midrule
\multicolumn{2}{l}{\emph{LavaGap}}\\
$\emptyset$& \bf{0.95}$\pm$0.003 &\bf{0.95}$\pm$0.075 & \bf{0.95}$\pm$0.101 & 0.94$\pm$0.068 & \bf{0.94}$\pm$0.004 & \bf{0.94}$\pm$0.005 & \bf{0.94}$\pm$0.003 & \bf{0.94}$\pm$0.006\\
$T_1$&0.80$\pm$0.041&\bf{0.95}$\pm$0.078&0.93$\pm$0.124&0.88$\pm$0.064&0.83$\pm$0.061&\bf{0.93}$\pm$0.019&0.89$\pm$0.032&0.91$\pm$0.088\\
$T_2$&0.92$\pm$0.015&\bf{0.95}$\pm$0.052&\bf{0.95}$\pm$0.094&0.93$\pm$0.050&0.89$\pm$0.029&\bf{0.94}$\pm$0.008&0.93$\pm$0.011&0.93$\pm$0.021\\
$T_{adv}^{2}$&0.01$\pm$0.051&{0.71}$\pm$0.251&{0.21}$\pm$0.357&\bf{0.87}$\pm$0.116&0.27$\pm$0.119&\bf{0.79}$\pm$0.069&0.68$\pm$0.127&0.56$\pm$0.249\\\midrule

\multicolumn{2}{l}{\emph{LavaCrossing}}\\
$\emptyset$&\bf{0.95}$\pm$0.023&{0.93}$\pm$0.050&{0.93}$\pm$0.018&0.88$\pm$0.091&{0.91}$\pm$0.024&{0.91}$\pm$0.063&{0.90}$\pm$0.017&\bf{0.92}$\pm$0.034\\
$T_1$&0.50$\pm$0.110&\bf{0.92}$\pm$0.053&{0.89}$\pm$0.029&0.64$\pm$0.109&0.66$\pm$0.071&\bf{0.78}$\pm$0.111&0.72$\pm$0.073&0.76$\pm$0.098\\
$T_2$&0.84$\pm$0.061&\bf{0.92}$\pm$0.050&\bf{0.92}$\pm$0.021&0.85$\pm$0.094&0.78$\pm$0.054&{0.83}$\pm$0.105&{0.86}$\pm$0.029&\bf{0.87}$\pm$0.063\\
$T_{adv}^{2}$&0.0$\pm$0.004&{0.50}$\pm$0.171&{0.38}$\pm$0.020&\bf{0.82}$\pm$0.072&0.06$\pm$0.056&{0.04}$\pm$0.030&0.01$\pm$0.008&\bf{0.09}$\pm$0.060\\\midrule

\multicolumn{2}{l}{\emph{DynamicObstacles}}\\
$\emptyset$&\bf{0.91}$\pm$0.002&\bf{0.91}$\pm$0.008&\bf{0.91}$\pm$0.007&\bf{0.91}$\pm$0.131&\bf{0.91}$\pm$0.011&0.88$\pm$0.020&0.89$\pm$0.009&\bf{0.91}$\pm$0.013\\
$T_1$&0.23$\pm$0.201&\bf{0.77}$\pm$0.102&0.61$\pm$0.119&0.45$\pm$0.188&0.27$\pm$0.104&0.43$\pm$0.108&0.45$\pm$0.162&\bf{0.56}$\pm$0.270\\
$T_2$&0.50$\pm$0.117&\bf{0.75}$\pm$0.075&0.70$\pm$0.072&0.68$\pm$0.490&0.45$\pm$0.086&0.53$\pm$0.109&{0.52}$\pm$0.161&\bf{0.67}$\pm$0.203\\
$T_{adv}^{2}$&-0.49$\pm$0.312&{0.51}$\pm$0.234&{0.33}$\pm$0.202&\bf{0.55}$\pm$0.170&-0.54$\pm$0.209&-{0.21}$\pm$0.192&-0.53$\pm$0.261&\bf{-0.51}$\pm$0.260\\
\bottomrule
\end{tabular}
\caption{Reward values gained by LRPG and baselines on discrete control tasks.}
\label{tab:1}
\vspace{-10pt}
\end{table*}

\section{Discussion}
\paragraph{Experiments} We applied LRPG on PPO and A2C (and SAC algorithms), for a set of discrete and continuous control environments. These environments are particularly sensitive to robustness problems; the rewards are sparse, and applying a sub-optimal action at any step of the trajectory often leads to terminal states with zero (or negative) reward. LRPG successfully induces lower robustness regrets in the tested scenarios, and the use of $K_D$ as an objective (even though we did not prove the convergence of a gradient based method with such objective) yields a better compromise between robustness and rewards. When compared to recent observational robustness methods, LRPG obtains similar robustness results while \emph{preserving the original guarantees of the chosen algorithm}.

\paragraph{Shortcomings and Contributions} The motivation for LRPG comes from situations where, when deploying a model-free controller in a dynamical system, we do not have a way of estimating the noise generation and we \emph{are required to retain convergence guarantees of the algorithms used}. Although LRPG is a useful approach for learning policies in control problems where the noise sources are unknown, questions emerge on whether there are more effective methods of incorporating robustness into RL policies when guarantees are not needed. Specifically, since a completely model-free approach does not allow for simple alternative solutions such as filtering or disturbance rejection, there are reasons to believe it could be outperformed by model-based (or model learning) approaches. However, we argue that in completely model-free settings, LRPG provides a rational strategy to robustify RL agents.

\bibliography{biblio_lrl}

\newpage
\appendix
\section{Examples and Further Considerations}\label{apx:ex}
We provide here two examples to show how we can obtain limit scenarios  $\Pi_0=\Pi$ (any policy is maximally robust) or $\Pi_0=\Pi_T$ (Example 1), and how for some MDPs the third inclusion in Theorem \ref{theo:InclusionTheorem} is strict (Example 2).
\paragraph{Example 1} Consider the simple MDP in Figure \ref{fig:mdp}. First, consider the reward function $R_1(x_1,\cdot,\cdot) = 10,\,R_1(x_2,\cdot,\cdot) = 0$. This produces a ``dummy'' MDP where all policies have the same reward sum. Then, $\forall T,\pi$, $V^{\langle \pi,T\rangle}=V^{\pi}$, and therefore we have $\Pi_D=\Pi_0=\Pi$.
\begin{figure}[H]
\centering
   \includegraphics{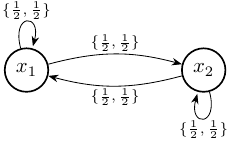}
     \caption{Example MDP. Values in brackets represent $\{P(\cdot,u_1,\cdot),P(\cdot,u_2,\cdot)\}$.}
     \label{fig:mdp}
\end{figure}
Now, consider the reward function $R_2(x_1,u_1,\cdot) = 10,\,R_2(\cdot,\cdot,\cdot) = 0$ elsewhere. Take a non-constant policy $\pi$, \emph{i.e.}, $\pi(x_1)\neq \pi(x_2)$. In the example DOMDP (assuming the initial state is drawn uniformly from $X_0=\{x_1,x_2\}$) one can show that at any time in the trajectory, there is a stationary probability $\Pr\{x_t=x_1\}=\frac{1}{2}$. Let us abuse notation and write $\pi(x_i)=(\begin{array}{cc}
\pi(x_i,u_1)& \pi(x_i,u_2)
\end{array})^\top$ and $R(x_i)=(\begin{array}{cc}
R(x_i,u_1,\cdot)& R(x_i,u_2,\cdot)
\end{array})^\top$. For the given reward structure we have $R(x_2)=(\begin{array}{cc}
0& 0
\end{array})^\top$, and therefore: 
\begin{equation}\label{eq:costex}
J(\pi) = E_{x_0\sim \mu_0}\left[\sum_{t=0}^{\infty}\gamma^t R_t\right]=\frac{1}{2}\langle R(x_1),\pi(x_1)\rangle \frac{\gamma}{1-\gamma}.
\end{equation}
Since the transitions of the MDP are independent of the actions, following the same principle as in \eqref{eq:costex}:
$J\langle \pi,T\rangle =\frac{1}{2}\langle R(x_1),\langle \cdot,T\rangle(\pi)(x_1)\rangle \frac{\gamma}{1-\gamma}.$
For any noise map $\langle \cdot,T\rangle\neq\operatorname{Id}$, for the two-state policy it holds that $\pi\notin\Pi_T\implies \langle \pi,T\rangle \neq \pi$. Therefore $\langle \pi,T\rangle(x_1)\neq\pi(x_1) $ and:
$$J(\pi)-J(\langle \pi,T\rangle) =\frac{5\gamma}{1-\gamma} \cdot \big( \pi(x_1,1)-\langle\pi,T\rangle(x_1,1) \big) \neq 0,$$
which implies that $\pi\notin\Pi_0$.
\paragraph{Example 2} Consider the same MDP in Figure \ref{fig:mdp} with reward function $R(x_1,u_1,\cdot) =R(x_2,u_1,\cdot)=10,$ and a reward of zero for all other transitions. Take a policy $\pi(x_1)=(1\,\,0)$, $\pi(x_2)=(0\,\,1)$. The policy yields a reward of $10$ in state $x_1$ and a reward of $0$ in state $x_2$. Again we assume the initial state is drawn uniformly from $X_0=\{x_1,x_2\}$. Then, observe: 
\begin{equation*}\begin{aligned}
J(\pi) =& E_{x_0\sim \mu_0}\left[\sum_{t=0}^{\infty}\gamma^t R_t\right]=\frac{1}{2}\langle R(x_1),\pi(x_1)\rangle \frac{\gamma}{1-\gamma}=\\
=&\frac{1}{2}\frac{10\gamma}{1-\gamma}= \frac{5\gamma}{1-\gamma}.
\end{aligned}
\end{equation*}
Define now noise map $T(\cdot\mid x_1)=(\frac{1}{2}\,\,\frac{1}{2})$ and $T(\cdot\mid x_2)=(\frac{1}{2}\,\,\frac{1}{2})$. Observe this noise map yields a policy with non-zero disadvantage, $D^{\pi}(x_1,T) = \frac{5\gamma}{1-\gamma}-\big(\frac{5\gamma}{1-\gamma}-2.5\big)=2.5$ and similarly $D^{\pi}(x_2,T) =-2.5$, therefore $\pi\notin\Pi_D$. However, the policy \emph{is maximally robust}:
\begin{equation}\label{eq:costex2}\begin{aligned}
J(\langle \pi,T\rangle) =\frac{1}{2}\langle R(x_1),\langle \pi,T\rangle(x_1)\rangle \frac{\gamma}{1-\gamma} +\\
+\frac{1}{2}\langle R(x_2),\langle \pi,T\rangle(x_2)\rangle \frac{\gamma}{1-\gamma}= \frac{1}{2}\frac{\gamma}{1-\gamma}\big(5 + 5\big)=\frac{5\gamma}{1-\gamma}.
\end{aligned}
\end{equation}
Therefore, $\pi\in\Pi_0$.
\section{Theoretical Results}\label{apx:theo}

\subsection{Auxiliary Results}

\begin{theorem}[Stochastic Approximation with Non-Expansive Operator]\label{th:approx}
    Let $\{\xi_t\}$ be a random sequence with $\xi_t\in\mathbb{R}^n$ defined by the iteration:
    \begin{equation*}
        \xi_{t+1} = \xi_{t} + \alpha_t(F(\xi_t)-\xi_t +M_{t+1}),
    \end{equation*}
    where:
    \begin{enumerate}
    \item The step sizes $\alpha_t$ satisfy standard learning rate assumptions.
        \item $F:\mathbb{R}^n\mapsto \mathbb{R}^n$ is a $\|\cdot\|_{\infty}$ non-expansive map. That is, for any $\xi_1,\xi_2\in\mathbb{R}^n$, $\|F(\xi_1)-F(\xi_2)\|_{\infty}\leq \|\xi_1-\xi_2\|_\infty$.
        \item $\{M_t\}$ is a martingale difference sequence with respect to the increasing family of $\sigma-$fields $\mathcal{F}_t:=\sigma(\xi_0,M_0,\xi_1,M_1,...,\xi_t,M_t)$.
    \end{enumerate}
Then, the sequence $\xi_t\to \xi^*$ \emph{almost surely} where $\xi^*$ is a fixed point such that $F(\xi^*) = \xi^*$. 
\end{theorem}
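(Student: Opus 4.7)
The plan is to apply the ODE approach to stochastic approximation. Rewriting the iteration as $\xi_{t+1} = \xi_t + \alpha_t(h(\xi_t) + M_{t+1})$ with $h(\xi) := F(\xi)-\xi$, the associated limiting ODE is $\dot\xi(t) = h(\xi(t))$. The strategy has two parts: (i) show this ODE has the fixed-point set of $F$ as a global attractor, using the non-expansiveness hypothesis; and (ii) invoke a standard stochastic-approximation convergence theorem to transfer the conclusion to the discrete iteration.

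For step (i), I would fix an arbitrary fixed point $\xi^*$ of $F$ and take $V(\xi) := \|\xi - \xi^*\|_\infty$ as a Lyapunov candidate. Since $V$ is not smooth, I work with its Dini upper derivative along the flow. A short computation using
$\|F(\xi) - \xi^*\|_\infty = \|F(\xi)-F(\xi^*)\|_\infty \leq \|\xi-\xi^*\|_\infty$
yields $D^+ V(\xi(t)) \leq 0$, so $V(\xi(t))$ is non-increasing and every fixed point is Lyapunov-stable. Combining this with the standard convergence result for non-expansive semiflows on $\mathbb{R}^n$ (e.g., Borkar 2008, Chapter 5), every trajectory converges to a single element of the fixed-point set of $F$.

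For step (ii), I would appeal to Borkar's theorem for stochastic approximation with an ODE limit: under Assumption~\ref{as:1} on the step sizes, the martingale difference hypothesis on $\{M_t\}$ (augmented with the usual square-integrability bound $\mathbb{E}[\|M_{t+1}\|^2 \mid \mathcal{F}_t] \leq K(1+\|\xi_t\|^2)$), and almost-sure boundedness of $\{\xi_t\}$, the discrete iterates converge a.s. to an internally chain-transitive invariant set of the ODE. By step (i) this set lies inside the fixed-point set of $F$; a Robbins-Siegmund argument applied to the sample-path monotonicity of $V$ then pins the limit down to a single $\xi^*$.

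The main obstacle is establishing almost-sure boundedness of $\{\xi_t\}$: unlike the contractive case, non-expansiveness alone does not stabilise the iteration. I would handle this either by adding an explicit boundedness assumption, by projecting onto a compact convex set that $F$ leaves invariant, or by verifying the Borkar--Meyn criterion on the scaled limits of $h$. For the paper's use case, where $\xi_t$ represents policy-parameter updates whose relevant coordinates live in the probability simplex over a finite state space, this obstacle disappears since the iterates are confined to a compact set by construction, so the remainder of the proof proceeds as a routine application of the standard machinery.
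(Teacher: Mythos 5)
The paper does not actually prove this statement: its ``proof'' is a pointer to \citet{563625}, where the result is established precisely by the ODE method for sup-norm non-expansive maps. Your proposal therefore reconstructs essentially the same route as the source the paper relies on (limiting ODE $\dot\xi = F(\xi)-\xi$, the candidate Lyapunov function $V(\xi)=\|\xi-\xi^*\|_\infty$, then a standard stochastic-approximation transfer theorem), and your list of hypotheses that the statement silently omits is accurate rather than a defect of your argument: without a nonempty fixed-point set the claim is simply false (take $F(\xi)=\xi+c$, $c\neq 0$, which is an isometry and drives the iteration to infinity), and the usual conditional second-moment bound on $M_{t+1}$ and almost-sure boundedness of $\{\xi_t\}$ are likewise needed. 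In the only place the theorem is invoked (Lemma~\ref{lem:1}) the iterates are rows of a policy confined to probability simplices and the noise is bounded, so all of these hold there, but the theorem as stated should carry them as assumptions.

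The one step you under-argue is the passage from ``$\|\xi(t)-\xi^*\|_\infty$ is non-increasing along the flow for every fixed point $\xi^*$'' to ``every trajectory converges to a fixed point.'' That monotonicity gives Lyapunov stability of $\mathrm{Fix}(F)$, but not attractivity: an $\omega$-limit set on which all of these distance functions are constant need not a priori consist of fixed points, and excluding such limit sets is exactly the technical content of \citet{563625}, where the argument exploits the specific geometry of the $\ell_\infty$ norm. Appealing at that point to ``the standard convergence result for non-expansive semiflows'' is, in effect, citing the theorem you are proving; this is defensible as a sketch (and no less self-contained than the paper, which cites the whole result), but a genuinely complete proof would have to supply that step. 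The remaining ingredients --- the Robbins--Siegmund/Opial-type argument to pin down a single limit point, and the compactness of the iterates in the intended application --- are handled correctly.
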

\begin{proof}
    See \citet{563625}.
\end{proof}
\begin{theorem}[PB-LRL Convergence with 2 objectives.\citep{Skalse2022}]\label{the:lrl}
Let $\mathcal{M}$ be a multi-objective MDP with objectives $K_i$, $i\in \{1,2\}$. Assume a policy $\pi$ is twice differentiable in parameters $\theta$, and if using a critic $V_i$ assume it is continuously differentiable on parameters $w_i$. Choose a tolerance $\epsilon$, and suppose that if PB-LRL is run for $T$ steps, there exists some limit point $w_i\to w_i^*(\theta)$ when $\theta$ is held fixed. If for both objectives there exists a gradient descent scheme such that $\lim_{T\to \infty}\mathbb{E}_t[\theta_t]\in \Theta_i^\epsilon$ then combining the objectives as in \eqref{eq:LRLupdates} yields $\lim_{T\to \infty}\mathbb{E}_t[\theta_t]\in \{\argmax_\theta K_2(\theta), 
\, \text{s.t.} \,\, 
K_1(\theta)\geq K_1(\theta')- \epsilon\}$.
\end{theorem}
\begin{proof}[Proof Sketch]
    We refer the interested reader to \citet{Skalse2022} for a full proof, and here attempt to provide the intuition behind the result in the form of a proof sketch. 
    
    Let us begin by briefly recalling the general problem statement: we wish to take a multi-objective MDP $\mathcal{M}$ with $m$ objectives, and obtain a lexicographically optimal policy (one that optimises the first objective, and then subject to this optimises the second objective, and so on). More precisely, for a policy $\pi$ parameterised by $\theta$, we say that $\pi$ is (globally) \textit{lexicographically $\epsilon$-optimal} if $\theta \in \Theta^\epsilon_m$, where $\Theta^\epsilon_0 = \Theta$ is the set of all policies in $\mathcal{M}$, $\Theta^\epsilon_{i+1} := \{\theta \in \Theta^\epsilon_i \mid \max_{\theta' \in \Theta^\epsilon_i} K_i(\theta') - K_i(\theta) \leq \epsilon_i \}$, and $\mathbb{R}^{m-1} \ni \epsilon \succcurlyeq 0$.\footnote{The proof in \citet{Skalse2022} also considers \emph{local} lexicographic optima, though for the sake of simplicity, we do not do so here.}
    
    The basic idea behind policy-based lexicographic reinforcement learning (PB-LRL) is to use a multi-timescale approach to first optimise $\theta$ using $K_1$, then at a slower timescale optimise $\theta$ using $K_2$ while adding the condition that the loss with respect to $K_1$ remains bounded by its current value, and so on. This sequence of constrained optimisations problems can be solved using a Lagrangian relaxation \citep{Bertsekas1999}, either in series or -- via a judicious choice of learning rates -- simultaneously, by exploiting a separation in timescales \citep{Borkar2008}. In the simultaneous case, the parameters of the critic $w_i$ (if using an actor-critic algorithm, if not this part of the argument may be safely ignored) for each objective are updated on the fastest timescale, then the parameters $\theta$, and finally (i.e., most slowly) the Lagrange multipliers for each of the remaining constraints.
    
    The proof proceeds via induction on the number of objectives, using a standard stochastic approximation argument \citep{Borkar2008}. In particular, due to the learning rates chosen, we may consider those more slowly updated parameters fixed for the purposes of analysing the convergence of the more quickly updated parameters. In the base case where $m=1$, we have (by assumption) that $\lim_{T\to \infty}\mathbb{E}_t[\theta]\in \Theta_1^\epsilon$. This is simply the standard (non-lexicographic) RL setting. Before continuing to the inductive step, \citet{Skalse2022} observe that because gradient descent on $K_1$ converges to globally optimal stationary point when $m = 1$ then $K_1$ must be globally \emph{invex} (where the opposite implication is also true) \citep{BenIsrael1986}.\footnote{A differentiable function $f : \mathbb{R}^n \rightarrow \mathbb{R}$ is (globally) invex if and only if there exists a function $g : \mathbb{R}^n \times \mathbb{R}^n \rightarrow \mathbb{R}^n$ such that $f(x_1) - f(x_2) \geq g(x_1,x_2)^\top \nabla f(x_2)$ for all $x_1, x_2 \in \mathbb{R}^n$ \citep{Hanson1981}.}
    
    The reason this observation is useful is that because each of the objectives $K_i$ shares the same functional form, they are all invex, and furthermore, invexity is conserved under linear combinations and the addition of scalars, meaning that the Lagrangian formed in the relaxation of each constrained optimisation problem is also invex. As a result, if we assume that $\lim_{T\to \infty}\mathbb{E}_t[\theta]\in \Theta_i^\epsilon$ as our inductive hypothesis, then the stationary point of the Lagrangian for optimising objective $K_{i+1}$ is a global optimum, given the constraints that it does not worsen performance on $K_1, \ldots, K_i$. Via Slater's condition \citep{Slater1950} and standard saddle-point arguments \citep{Bertsekas1999,Paternain2019}, we therefore have that $\lim_{T\to \infty}\mathbb{E}_t[\theta]\in \Theta_{i+1}^\epsilon$, completing the inductive step, and thus the overall inductive argument.
    
    This concludes the proof that $\lim_{T\to \infty}\mathbb{E}_t[\theta]\in \Theta_m^\epsilon$. We refer the reader to \citet{Skalse2022} for a discussion of the error $\epsilon$, but intuitively it corresponds to a combination of the representational power of $\theta$, the critic parameters $w_i$ (if used), and the duality gap due to the Lagrangian relaxation \citep{Paternain2019}. In cases where the representational power of the various parameters is sufficiently high, then it can be shown that $\epsilon = 0$.
\end{proof}

\subsection{On Adversarial Disturbances and other Noise Kernels}\label{apx:T}
A problem that remains open after this work is what constitutes an appropriate choice of $\tilde{T}$, and what can we expect by restricting a particular class of $\tilde{T}$. We first discuss adversarial examples, and then general considerations on $\tilde{T}$ versus $T$.
\paragraph{Adversarial Noise} As mentioned in the introduction, much of the previous work focuses on adversarial disturbances. We did not directly address this in the results of this work since our motivation lies in the scenarios where the disturbance is not adversarial and is unknown. However, following the results of Section \ref{sec:3}, we are able to reason about adversarial disturbances. Consider an adversarial map $T_{adv}$ to be
$$\langle \pi,T_{adv}\rangle (x) = \pi(y), \quad y\in \operatorname{argmax}_{y\in X_{ad}(x)}d\big(\pi(x),\pi(y)\big), $$
with $X_{ad}(x)\subseteq X$ being a set of admissible disturbance states for $x$, and $d(\cdot,\cdot)$ is a distance measure between distributions (\emph{e.g.} 2-norm).
\begin{proposition}
    Constant policies are a fixed point of $T_{adv}$, and are the only fixed points if for all pairs $x_0,x_k$ there exists a sequence $\{x_0,...,x_k\}\subseteq X$ such that $x_{i}\in X_{ad}(x_{i})$.
\end{proposition}
\begin{proof}
    First, it is straight-forward that if $\overline{\pi}\in \overline{\Pi}\implies \langle \overline{\pi},T_{adv}\rangle (x) = \overline{\pi}(x)$. To show they are the only fixed points, assume that there is a non-constant policy $\pi'$ that is a fixed point of $T_{ad}$. Then, there exists $x,z$ such that $\pi'(x)\neq \pi'(z)$. However, by assumption, we can construct a sequence $\{x,...,z\}\subseteq X$ that connects $x$ and $z$ and every state in the sequence is in the admissible set of the previous one. Assume without loss of generality that this sequence is $\{x,y,z\}$. Then, if $\pi'$ is a fixed point, $\langle {\pi'},T_{adv}\rangle (x)=\pi'(x)$, $\langle {\pi'},T_{adv}\rangle (y)=\pi'(y)$ and $\langle {\pi'},T_{adv}\rangle (z)=\pi'(z)$. However, $\pi'(x)\neq \pi'(z)$, so either $\pi'(x)\neq \pi'(y)\implies d(\pi'(x), \pi'(y))\neq 0$ or $\pi'(y)\neq \pi'(z)\implies d(\pi'(y), \pi'(z))\neq 0$, therefore $\pi'$ cannot be a fixed point of $T_{adv}$.
\end{proof}
The main difference between an adversarial operator and the random noise considered throughout this work is that $T_{adv}$ is \emph{not a linear operator}, and additionally, it is time varying (since the policy is being modified at every time step of the PG algorithm). Therefore, including it as a LRPG objective would invalidate the assumptions required for LRPG to retain formal guarantees of the original PG algorithm used, and it is not guaranteed that the resulting policy gradient algorithm would converge.

\section{Experiment Methodology}\label{apx:sims}
We use in the experiments well-tested implementations of A2C, PPO and SAC from Stable Baselines 3 \citep{stable-baselines3} to include the computation of the lexicographic parameters in \eqref{eq:LRLupdates}. All experiments were run on an Ubuntu 18.04 system, with a 16 core CPU and a graphic card Nvidia GeForce 3060.

\paragraph{LRPG Parameters.} The LRL parameters are initialised in all cases as $\beta_0^1=2,\,\beta_0^2=1,\,\lambda =0$ and $\eta = 0.001$. The LRL tolerance is set to $\epsilon_t=0.99\hat{k}_1$ to ensure we never deviate too much from the original objective, since the environments have very sparse rewards. We use a first order approximation to compute the LRL weights from the original 
{LMORL implementation}.

\subsection{Discrete Control}
The discrete control environments used can be seen in Figure \ref{fig:envs}.
\begin{figure*}[h]
\centering
\resizebox{0.3\linewidth}{!}{\includegraphics{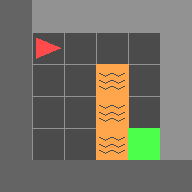}}
\resizebox{0.3\linewidth}{!}{\includegraphics{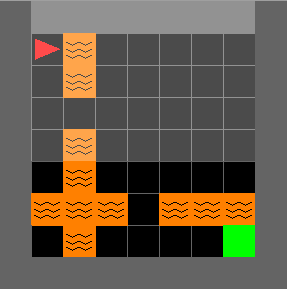}}
\resizebox{0.3\linewidth}{!}{\includegraphics{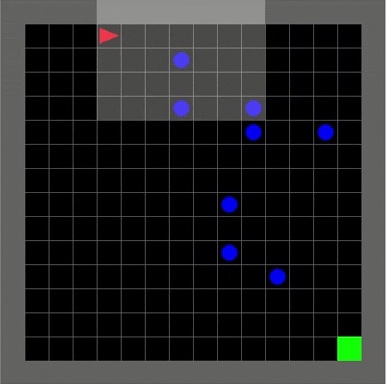}}
\caption{Screenshots of the environments used, from left: LavaGap, LavaCrossing and DynamicObstables.}
\label{fig:envs}
\end{figure*}
Since all the environments use a pixel representation of the observation, we use a shared representation for the value function and policy, where the first component is a convolutional network, implemented as in \citet{deeprl}. The hyper-parameters of the neural representations are presented in Table \ref{tab:2}.
\begin{table}[H]
\centering
\begin{tabular}{||c c c||} 
 \hline
Layer & Output & Func. \\ [0.5ex] 
 \hline\hline
 Conv1 & 16 & ReLu \\ 
 \hline
 Conv2 & 32 & ReLu \\
 \hline
 Conv3 & 64 & ReLu \\
 \hline
\end{tabular}
\caption{Shared Observation Layers}
\label{tab:2}
\end{table}
The actor and critic layers, for both algorithms, are a fully connected layer with $64$ features as input and the corresponding output. We used in all cases an Adam optimiser. We optimised the parameters for each (vanilla) algorithm through a quick parameter search, and apply the same parameters for the Lexicographically Robust versions.
\begin{table}[H]
\centering
\begin{tabular}{||l c c c||}
\hline
 & LavaGap & LavaCrossing & DynObs\\
 \hline\hline
 Parallel Envs & 16 & 16 & 16 \\
 \hline
 Steps & $2\cdot 10^6$ & $2\cdot 10^6$ & $8\times 10^6$ \\ 
 \hline
 $\gamma$ & 0.99 & 0.99 & 0.98 \\
 \hline
 $\alpha$ & 0.00176 & 0.00176 & 0.00181 \\
 \hline
 $\epsilon$(Adam) & $10^{-8}$ & $10^{-8}$ & $10^{-8}$  \\
 \hline
 Grad. Clip & 0.9 & 0.9 & 0.5 \\
 \hline
 Gae & 0.95 & 0.95 & 0.95 \\
 \hline
 Rollout & 64 & 64 & 64 \\
  \hline
 E. Coeff & 0.01 & 0.014 & 0.011 \\
  \hline
 V. Coeff & 0.05 & 0.05 & 0.88 \\
 \hline
\end{tabular}
\caption{A2C Parameters}
\label{tab:3}
\end{table}
\begin{table}[H]
\centering
\begin{tabular}{||l c c c||}
\hline
 & LavaGap & LavaCrossing & DynObs\\
 \hline\hline
 Parallel Envs & 8 & 8 & 8\\
 \hline
 Steps & $6\cdot 10^6$ & $2\cdot 10^6$ & $8\times 10^5$ \\ 
 \hline
 $\gamma$ & 0.95 & 0.99 & 0.97 \\
 \hline
 $\alpha$ & 0.001 & 0.001 & 0.001 \\
 \hline
 $\epsilon$(Adam) & $10^{-8}$ & $10^{-8}$ & $10^{-8}$ \\
 \hline
 Grad. Clip & 1 & 1 & 0.1 \\
 \hline
 Ratio Clip & 0.2 & 0.2 & 0.2\\
 \hline
 Gae & 0.95 & 0.95 & 0.95 \\
 \hline
 Rollout & 256 & 512 & 256 \\
  \hline
 Epochs & 10 & 10 & 10 \\
  \hline
 E. Coeff & 0 & 0.1 & 0.01 \\
 \hline
\end{tabular}
\caption{PPO Parameters}
\label{tab:6}
\end{table}
For the implementation of the LRPG versions of the algorithms, in all cases we allow the algorithm to iterate for $1/3$ of the total steps before starting to compute the robustness objectives. In other words, we use $\hat{K}(\theta)=K_1(\theta)$ until $t=\frac{1}{3}\operatorname{max\_steps}$, and from this point we resume the lexicographic robustness computation as described in Algorithm \ref{alg:cap}. This is due to the structure of the environments simulated. The rewards (and in particular the positive rewards) are very sparse in the environments considered. Therefore, when computing the policy gradient steps, the loss for the primary objective is practically zero until the environment is successfully solved at least once. If we implement the combined lexicographic loss from the first time step, many times the algorithm would converge to a (constant) policy without exploring for enough steps, leading to convergence towards a maximally robust policy that does not solve the environment. 

\textbf{Noise Kernels.} We consider two types of noise; a normal distributed noise $\tilde{T}^g$ and a uniform distributed noise $\tilde{T}^u$. For the environments LavaGap and DynamicObstacles, the kernel $\tilde{T}^u$ produces a disturbed state $\tilde{x}=x+\xi$ where $\|\xi\|_{\infty}\leq 2$, and for LavaCrossing $\|\xi\|_{\infty}\leq 1.5$. The normal distributed noise is in all cases $\mathcal{N}(0,0.5)$. The maximum norm of the noise is quite large, but this is due to the structure of the observations in these environments. The pixel values are encoded as integers $0-9$, where each integer represents a different feature in the environment (empty space, doors, lava, obstacle, goal...). Therefore, any noise $\|\xi\|_{\infty}\leq 0.5$ would most likely not be enough to \textit{confuse} the agent. On the other hand, too large noise signals are unrealistic and produce pathological environments. All the policies are then tested against two ``true" noise kernels, $T_1=\tilde{T}^u$ and $T_2=\tilde{T}^g$. The main reason for this is to test both the scenarios where we assume a \textit{wrong} noise kernel, and the case where we are training the agents with the correct kernel.

\textbf{Comparison with SA-PPO.} One of the baselines included is the State-Adversarial PPO algorithm proposed in \citet{zhang2020robust}. The implementation includes an extra parameter that multiplies the regularisation objective, $k_{ppo}$. Since we were not able to find indications on the best parameter for discrete action environments, we implemented $k_{ppo}\in \{0.1, 1, 2\}$ and picked the best result for each entry in Table \ref{tab:1}. Larger values seemed to de-stabilise the learning in some cases. The rest of the parameters are kept as in the vanilla PPO implementation.

\subsubsection{Extended Results: Adversarial Disturbances}

Even though we do not use an adversarial attacker or disturbance in our reasoning through this work, we implemented a policy-based state-adversarial noise disturbance to test the benchmark algorithms against, and evaluate how well each of the methods reacts to such adversarial disturbances.

\paragraph{Adversarial Disturbance} We implement a bounded policy-based adversarial attack, where at each state $x$ we maximise for the KL divergence between the disturbed and undisturbed state, such that the adversarial operator is:
\begin{equation*}\begin{aligned}
    T_{adv}^{\varepsilon}(y\mid x) = 1 \implies y\in\argmax_{\tilde{x}}&\,\,D_{KL}(\pi(x),\pi(\tilde{x})) \\
    s.t.& \,\,\|x-\tilde{x}\|_2\leq \varepsilon.
\end{aligned}
\end{equation*}
The optimisation problem is solved at every point by using a Stochastic Gradient Langevin Dynamics (SGLD) optimiser. The results are presented in Table \ref{tab:app}.
\begin{table*}\centering
\tiny
\tabcolsep=0.11cm
\begin{tabular}{@{}lrrrr||rrrr@{}}\toprule
& \multicolumn{4}{c}{\emph{PPO on MiniGrid Environments}} & \multicolumn{4}{c}{\emph{A2C on MiniGrid Environments}} 
\\ \cmidrule{1-9}
Noise & Vanilla & LR$_{\text{PPO}}$$(K^u_{T})$ & LR$_{\text{PPO}}$$(K^g_{T})$ & SA-PPO & Vanilla & LR$_{\text{A2C}}$$(K^u_{T})$ & LR$_{\text{A2C}}$$(K^g_{T})$ & LR$_{\text{A2C}}$$(K_D)$\\ \midrule
\multicolumn{2}{l}{\emph{LavaGap}}\\
$\emptyset$& \bf{0.95}$\pm$0.003 &\bf{0.95}$\pm$0.075 & \bf{0.95}$\pm$0.101 & 0.94$\pm$0.068 & \bf{0.94}$\pm$0.004 & \bf{0.94}$\pm$0.005 & \bf{0.94}$\pm$0.003 & \bf{0.94}$\pm$0.006\\
$T_1$&0.80$\pm$0.041&\bf{0.95}$\pm$0.078&0.93$\pm$0.124&0.88$\pm$0.064&0.83$\pm$0.061&\bf{0.93}$\pm$0.019&0.89$\pm$0.032&0.91$\pm$0.088\\
$T_2$&0.92$\pm$0.015&\bf{0.95}$\pm$0.052&\bf{0.95}$\pm$0.094&0.93$\pm$0.050&0.89$\pm$0.029&\bf{0.94}$\pm$0.008&0.93$\pm$0.011&0.93$\pm$0.021\\
$T_{adv}^{0.5}$&0.56$\pm$0.194&\bf{0.93}$\pm$0.101&0.91$\pm$0.076&0.90$\pm$0.123&0.92$\pm$0.034&\bf{0.94}$\pm$0.003&\bf{0.94}$\pm$0.007&0.93$\pm$0.015\\
$T_{adv}^{1}$&0.20$\pm$0.243&\bf{0.90}$\pm$0.124&{0.68}$\pm$0.190&\bf{0.90}$\pm$0.135&0.75$\pm$0.123&\bf{0.94}$\pm$0.006&0.92$\pm$0.038&0.88$\pm$0.084\\
$T_{adv}^{2}$&0.01$\pm$0.051&{0.71}$\pm$0.251&{0.21}$\pm$0.357&\bf{0.87}$\pm$0.116&0.27$\pm$0.119&\bf{0.79}$\pm$0.069&0.68$\pm$0.127&0.56$\pm$0.249\\\midrule

\multicolumn{2}{l}{\emph{LavaCrossing}}\\
$\emptyset$&\bf{0.95}$\pm$0.023&{0.93}$\pm$0.050&{0.93}$\pm$0.018&0.88$\pm$0.091&{0.91}$\pm$0.024&{0.91}$\pm$0.063&{0.90}$\pm$0.017&\bf{0.92}$\pm$0.034\\
$T_1$&0.50$\pm$0.110&\bf{0.92}$\pm$0.053&{0.89}$\pm$0.029&0.64$\pm$0.109&0.66$\pm$0.071&\bf{0.78}$\pm$0.111&0.72$\pm$0.073&0.76$\pm$0.098\\
$T_2$&0.84$\pm$0.061&\bf{0.92}$\pm$0.050&\bf{0.92}$\pm$0.021&0.85$\pm$0.094&0.78$\pm$0.054&{0.83}$\pm$0.105&{0.86}$\pm$0.029&\bf{0.87}$\pm$0.063\\
$T_{adv}^{0.5}$&0.29$\pm$0.098&\bf{0.91}$\pm$0.081&\bf{0.91}$\pm$0.054&0.87$\pm$0.045&0.56$\pm$0.039&{0.51}$\pm$0.089&0.43$\pm$0.041&\bf{0.68}$\pm$0.126\\
$T_{adv}^{1}$&0.03$\pm$0.022&{0.83}$\pm$0.122&{0.86}$\pm$0.132&\bf{0.87}$\pm$0.059&0.27$\pm$0.158&{0.25}$\pm$0.118&0.17$\pm$0.067&\bf{0.43}$\pm$0.060\\
$T_{adv}^{2}$&0.0$\pm$0.004&{0.50}$\pm$0.171&{0.38}$\pm$0.020&\bf{0.82}$\pm$0.072&0.06$\pm$0.056&{0.04}$\pm$0.030&0.01$\pm$0.008&\bf{0.09}$\pm$0.060\\\midrule

\multicolumn{2}{l}{\emph{DynamicObstacles}}\\
$\emptyset$&\bf{0.91}$\pm$0.002&\bf{0.91}$\pm$0.008&\bf{0.91}$\pm$0.007&\bf{0.91}$\pm$0.131&\bf{0.91}$\pm$0.011&0.88$\pm$0.020&0.89$\pm$0.009&\bf{0.91}$\pm$0.013\\
$T_1$&0.23$\pm$0.201&\bf{0.77}$\pm$0.102&0.61$\pm$0.119&0.45$\pm$0.188&0.27$\pm$0.104&0.43$\pm$0.108&0.45$\pm$0.162&\bf{0.56}$\pm$0.270\\
$T_2$&0.50$\pm$0.117&\bf{0.75}$\pm$0.075&0.70$\pm$0.072&0.68$\pm$0.490&0.45$\pm$0.086&0.53$\pm$0.109&{0.52}$\pm$0.161&\bf{0.67}$\pm$0.203\\
$T_{adv}^{0.5}$&0.74$\pm$0.230&{0.89}$\pm$0.118&0.85$\pm$0.061&\bf{0.90}$\pm$0.142&0.46$\pm$0.214&{0.55}$\pm$0.197&0.51$\pm$0.371&\bf{0.62}$\pm$0.249\\
$T_{adv}^{1}$&0.26$\pm$0.269&{0.79}$\pm$0.157&{0.68}$\pm$0.144&\bf{0.84}$\pm$0.150&0.19$\pm$0.284&\bf{0.35}$\pm$0.197&0.23$\pm$0.370&0.10$\pm$0.379\\
$T_{adv}^{2}$&-0.49$\pm$0.312&{0.51}$\pm$0.234&{0.33}$\pm$0.202&\bf{0.55}$\pm$0.170&-0.54$\pm$0.209&-{0.21}$\pm$0.192&-0.53$\pm$0.261&\bf{-0.51}$\pm$0.260\\
\bottomrule
\end{tabular}
\caption{Extended Reward Results.}
\label{tab:app}
\end{table*}

This type of adversarial attack with SGLD optimiser was proposed in \citet{zhang2020robust}. As one can see, the adversarial disturbance is quite successful at severely lowering the obtained rewards in all scenarios. Additionally, as expected SA-PPO was the most effective at minimizing the disturbance effect (as it is trained with adversarial disturbances), although LRPG produces reasonably robust policies against this type of disturbances as well. At last, A2C appears to be much more sensitive to adversarial disturbances than PPO, indicating that the policies produced by PPO are by default more robust than A2C.

 \begin{figure}[htbp]
    \centering
    \includegraphics[width=0.49\linewidth]{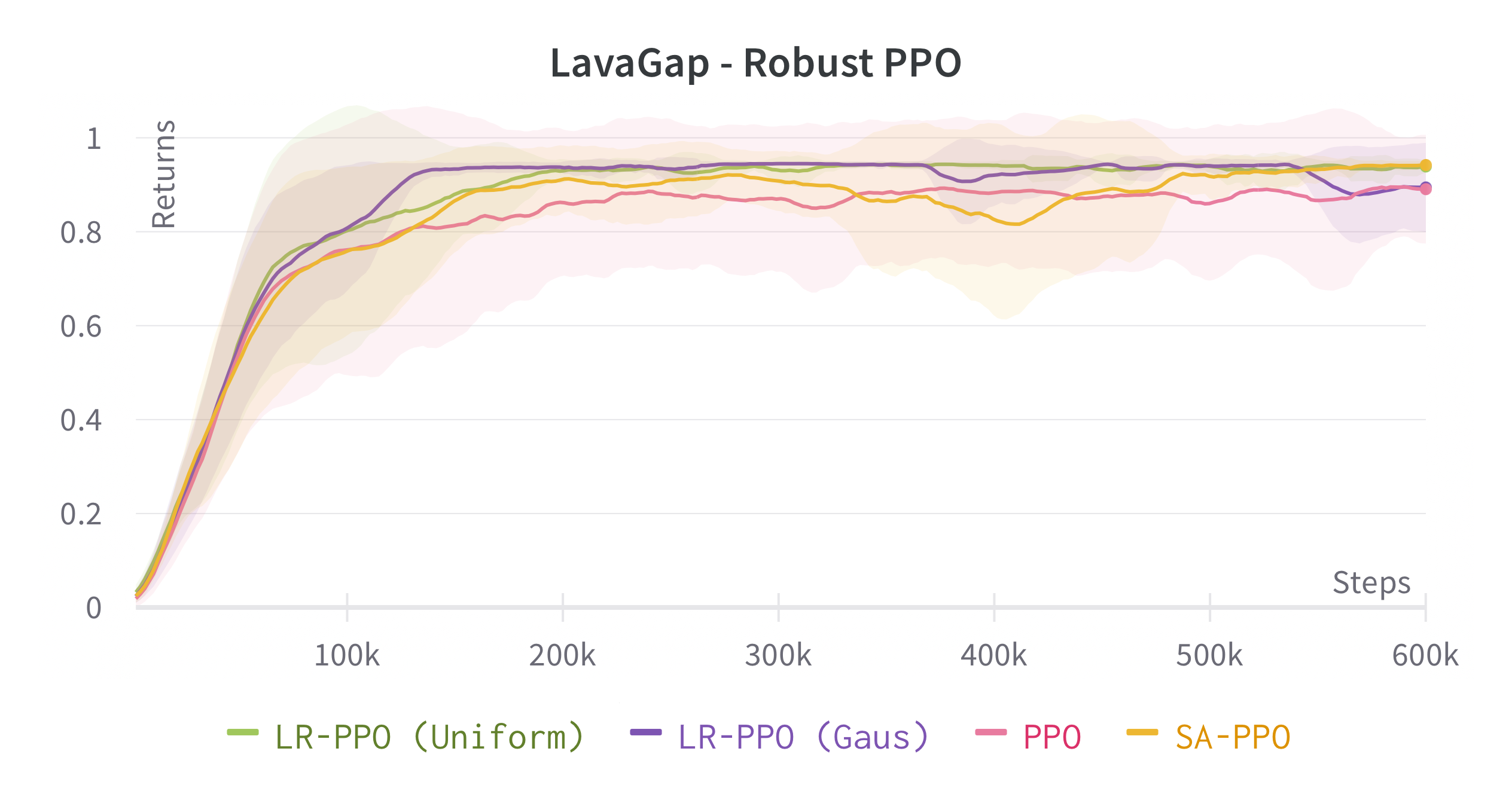}
    \includegraphics[width=0.49\linewidth]{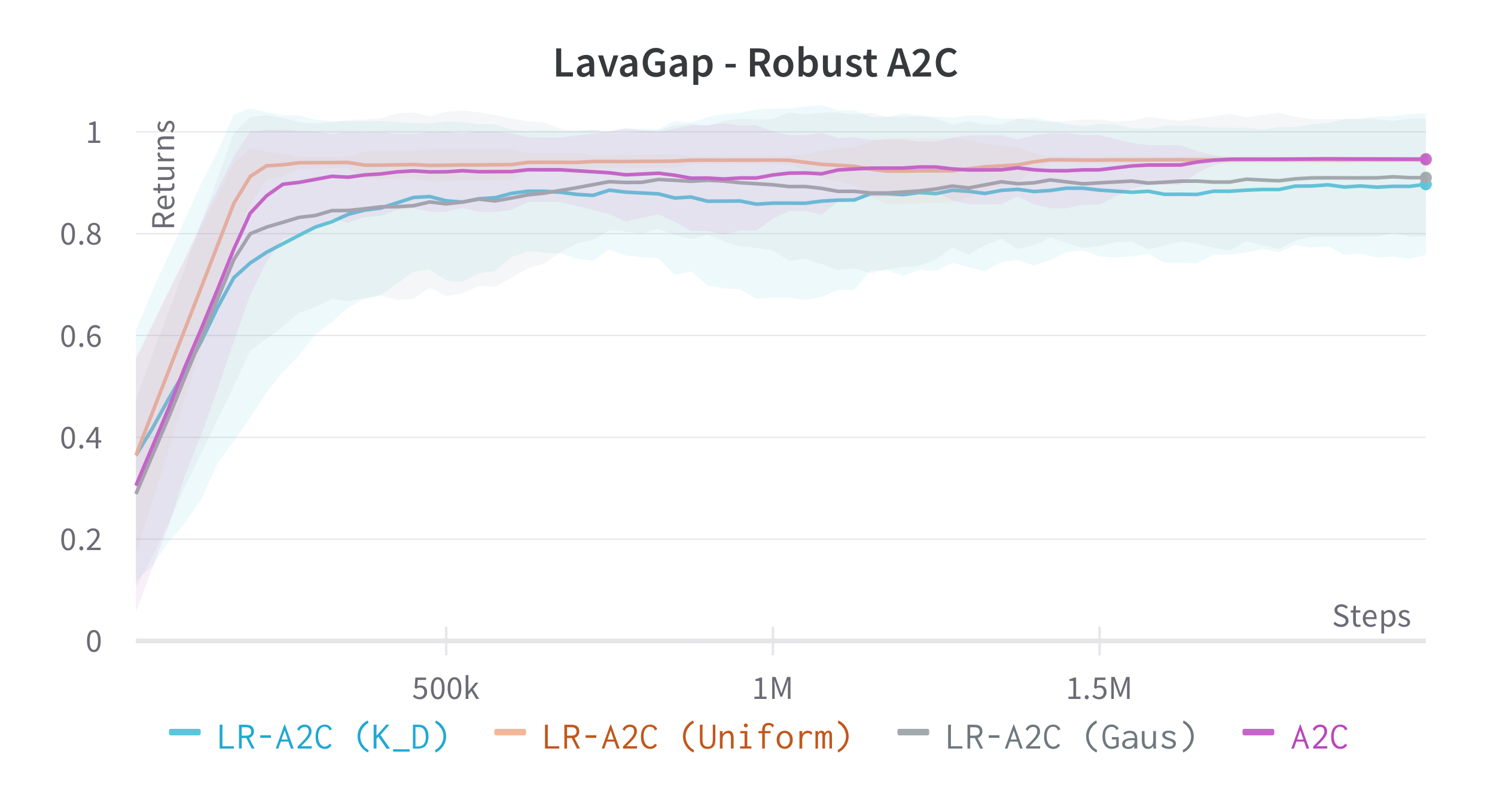}
    \includegraphics[width=0.49\linewidth]{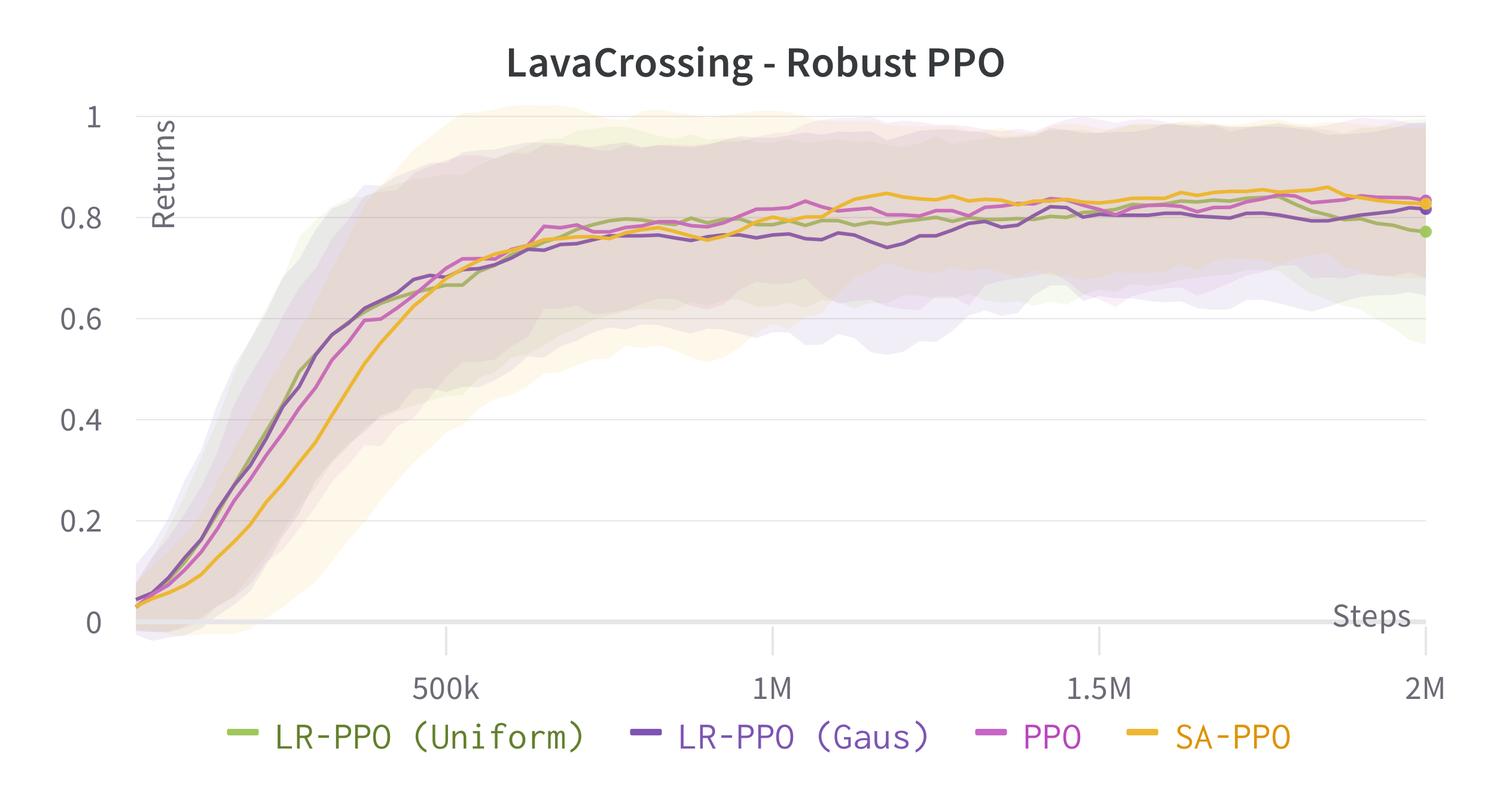}
    \includegraphics[width=0.49\linewidth]{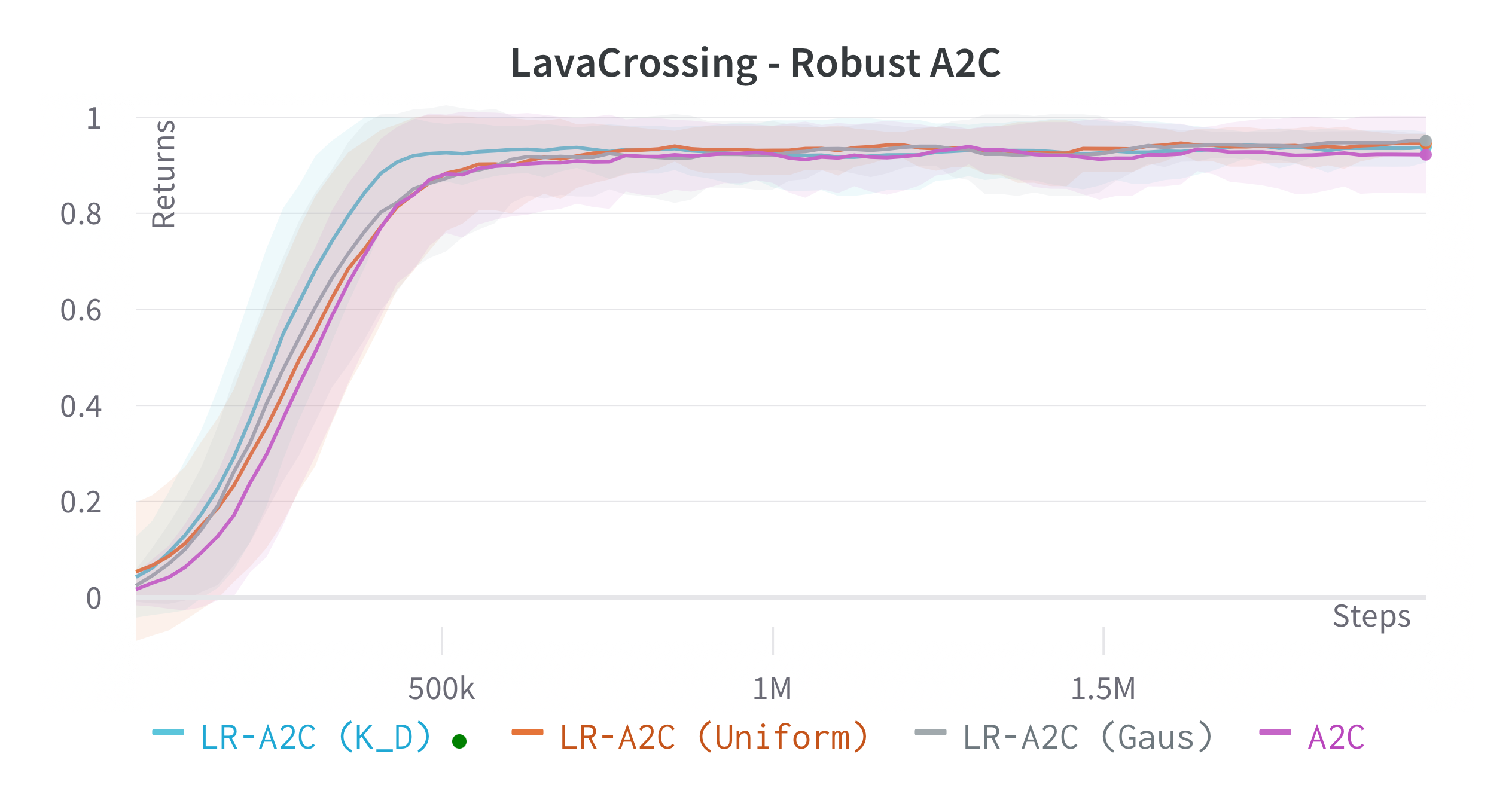}
    \includegraphics[width=0.49\linewidth]{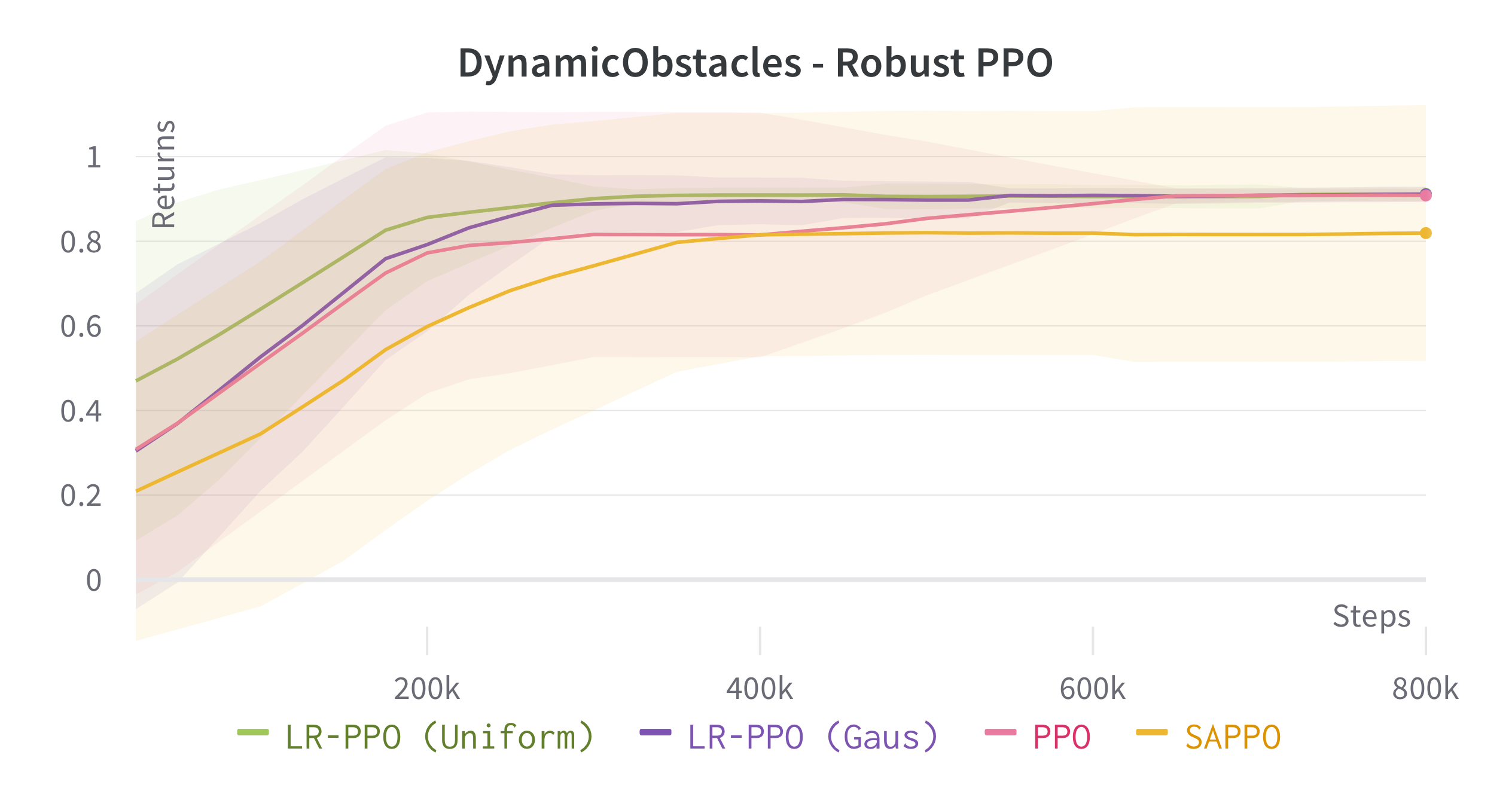}
    \includegraphics[width=0.49\linewidth]{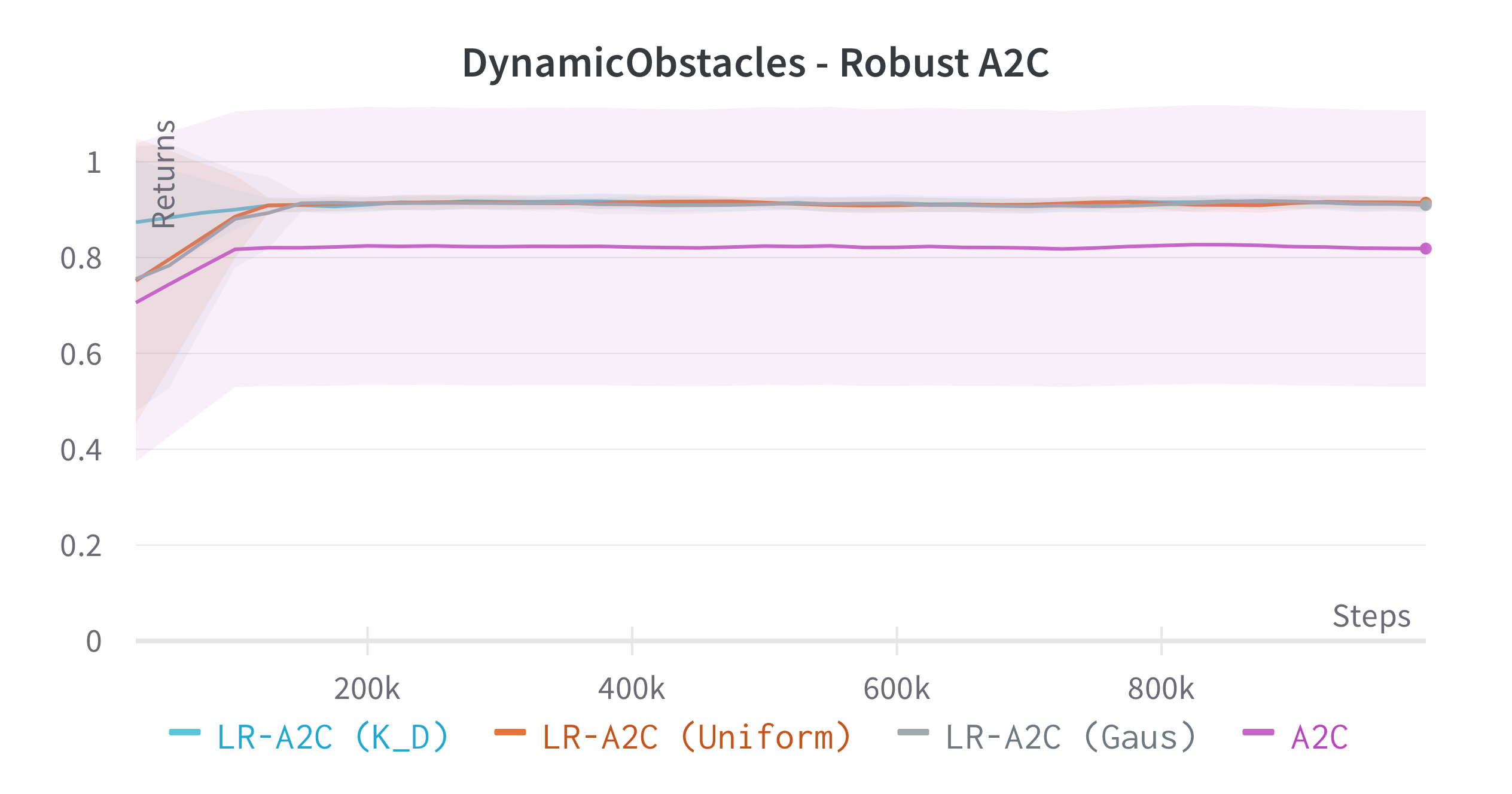}
    \caption{Learning Plots for Discrete Control Environments.}
\end{figure}
\subsection{Continuous Control}
The continuous control environments simulated are MountainCar, LunarLander and BipedalWalker. The policies used are in all cases MLP policies with ReLU gates and a $(64,64)$ feature extractor plus a fully connected layer to output the values and actions unless stated otherwise. The hyperparameters can be found in tables \ref{tab:ppocontinuous} and \ref{tab:saccontinuous}. The implementation is based on Stable Baselines 3 \citep{stable-baselines3} tuned algorithms. 
\begin{table*}[t] \centering
\tiny
\tabcolsep=0.11cm
\begin{tabular}{@{}lrrrr||rrr@{}}\toprule
& \multicolumn{4}{c}{\emph{PPO on Continuous Environments}} & \multicolumn{3}{c}{\emph{SAC on Continuous Environments}} 
\\ \cmidrule{1-8}
Noise & Vanilla & LR$_{\text{PPO}}$ $(K^u_{T})$ & LR$_{\text{PPO}}$ $(K^g_{T})$ & SA-PPO & Vanilla & LR$_{\text{SAC}}$ $(K^u_{T})$ & LR$_{\text{SAC}}$ $(K^g_{T})$ \\ \midrule
\multicolumn{2}{l}{\emph{MountainCar}}\\
$\emptyset$& \bf{94.77}$\pm$0.26 &{93.17}$\pm$0.89 & {94.66}$\pm$1.61 & 88.69$\pm$3.93 & 93.52$\pm$0.05 & \bf{94.43}$\pm$0.19 & {93.84}$\pm$0.05 \\
$T_1$&88.67$\pm$1.41&{91.46}$\pm$1.22 & \bf{94.91}$\pm$1.35 & 88.41$\pm$3.99& 1.89$\pm$65.31&{71.81}$\pm$13.04&\bf{76.90}$\pm$7.11 \\
$T_2$&92.22$\pm$1.11&{92.40}$\pm$1.28&\bf{94.76}$\pm$1.42 & 89.32$\pm$3.79 & -27.82$\pm$73.10&\bf{72.93}$\pm$8.57&69.41$\pm$13.03 \\\midrule
\multicolumn{2}{l}{\emph{LunarLander}}\\
$\emptyset$&{267.99}$\pm$38.04 &\bf{269.76}$\pm$22.93 &{243.08}$\pm$37.03 & 220.18$\pm$98.78 &{268.96}$\pm$51.52&{275.17}$\pm$14.04&\bf{282.24}$\pm$15.95 \\
$T_1$&156.09$\pm$22.87&\bf{280.91}$\pm$20.34&{182.80}$\pm$49.26 & 164.53 $\pm$45.48&128.18$\pm$17.73&\bf{187.64}$\pm$76.30&{153.81}$\pm$33.16 \\
$T_2$&158.02$\pm$46.57&\bf{276.76}$\pm$16.20&{212.62}$\pm$37.56 & 221.84$\pm$73.61 &{140.92}$\pm$20.61&\bf{187.82}$\pm$25.27&{158.18}$\pm$28.60 \\\midrule
\multicolumn{2}{l}{\emph{BipedalWalker}}\\
$\emptyset$&{265.39}$\pm$82.36&{261.39}$\pm$83.19&\bf{276.66}$\pm$44.85 & 251.60$\pm$103.08 &236.39 $\pm$157.03&{302.56}$\pm$70.79&\bf{313.56}$\pm$52.17 \\
$T_1$&174.15$\pm$170.30&{253.56}$\pm$72.66& 220.28$\pm$118.61&\bf{264.69}$\pm$61.63 &203.93 $\pm$167.83&241.45$\pm$124.54&\bf{241.60}$\pm$139.93 \\
$T_2$&135.16$\pm$182.30&{243.27}$\pm$89.86& \bf{265.37}$\pm$80.60&255.21$\pm$90.61 &84.10 $\pm$198.12&198.20$\pm$151.64&\bf{229.75}$\pm$166.87 \\
\bottomrule
\end{tabular}
\caption{Reward values gained by LRPG and baselines on continuous control tasks.}
\label{tab:4}

\end{table*}
\textbf{Noise Kernels.} We consider again two types of noise; a normal distributed noise $\tilde{T}^g$ and a uniform distributed noise $\tilde{T}^u$. In all cases, algorithms are implemented with a state observation normalizer. That is, assimptotically all states will be observed to be in the set $(-1,1)$. For this reason, the uniform noise is bounded at lower values than for the discrete control environments. For BipedalWalker $\|\xi\|_{\infty}\leq 0.05$ and for Lunarlander and MountainCar $\|\xi\|_{\infty}\leq 0.1$. Larger values were shown to destabilize learning. 
\begin{table}
\centering
\begin{tabular}{||l c c c||}
\hline
& MountainCarContinuous & LunarLanderContinuous & BipedalWalker-v3\\
\hline\hline
Parallel Envs & 1 & 16 & 32\\
\hline
Steps & $2\times 10^4$ & $1\times 10^6$ & $5\times 10^6$ \\
\hline
$\gamma$ & 0.9999 & 0.999 & 0.999 \\
\hline
$\alpha$ & $3\times 10^{-4}$ & $3\times 10^{-4}$ & $3\times 10^{-4}$ \\
\hline
Grad. Clip & 5 & 0.5 & 0.5 \\
\hline
Ratio Clip & 0.2 & 0.2 & 0.18\\
\hline
Gae & 0.9 & 0.98 & 0.95 \\
\hline
Epochs & 10 & 4 & 10 \\
\hline
E. Coeff & 0.00429 & 0.01 & 0 \\
\hline
\end{tabular}
\label{tab:ppocontinuous}
\caption{PPO Parameters for Continuous Control}
\end{table}

\begin{table}
\centering
\begin{tabular}{||l c c c||}
\hline
& MountainCarContinuous & LunarLanderContinuous & BipedalWalker-v3\\
\hline\hline
Steps & $5\times 10^4$ & $5\times 10^5$ & $5\times 10^5$ \\
\hline
$\gamma$ & 0.9999 & 0.99 & 0.98 \\
\hline
$\alpha$ & $3\times 10^{-4}$ & $7.3\times 10^{-4}$ & $7.3\times 10^{-4}$ \\
\hline
$\tau$ & 0.01 & 0.01 & 0.01 \\
\hline
Train Freq. & 32 & 1 & 64 \\
\hline
Grad. Steps & 32 & 1 & 64 \\
\hline
MLP Arch & (64,64) & (400,300) & (400,300) \\
\hline
\end{tabular}
\caption{SAC Parameters for Continuous Control}
\label{tab:saccontinuous}
\end{table}

\begin{figure}[htbp]
    \centering
        \includegraphics[width=0.49\linewidth]{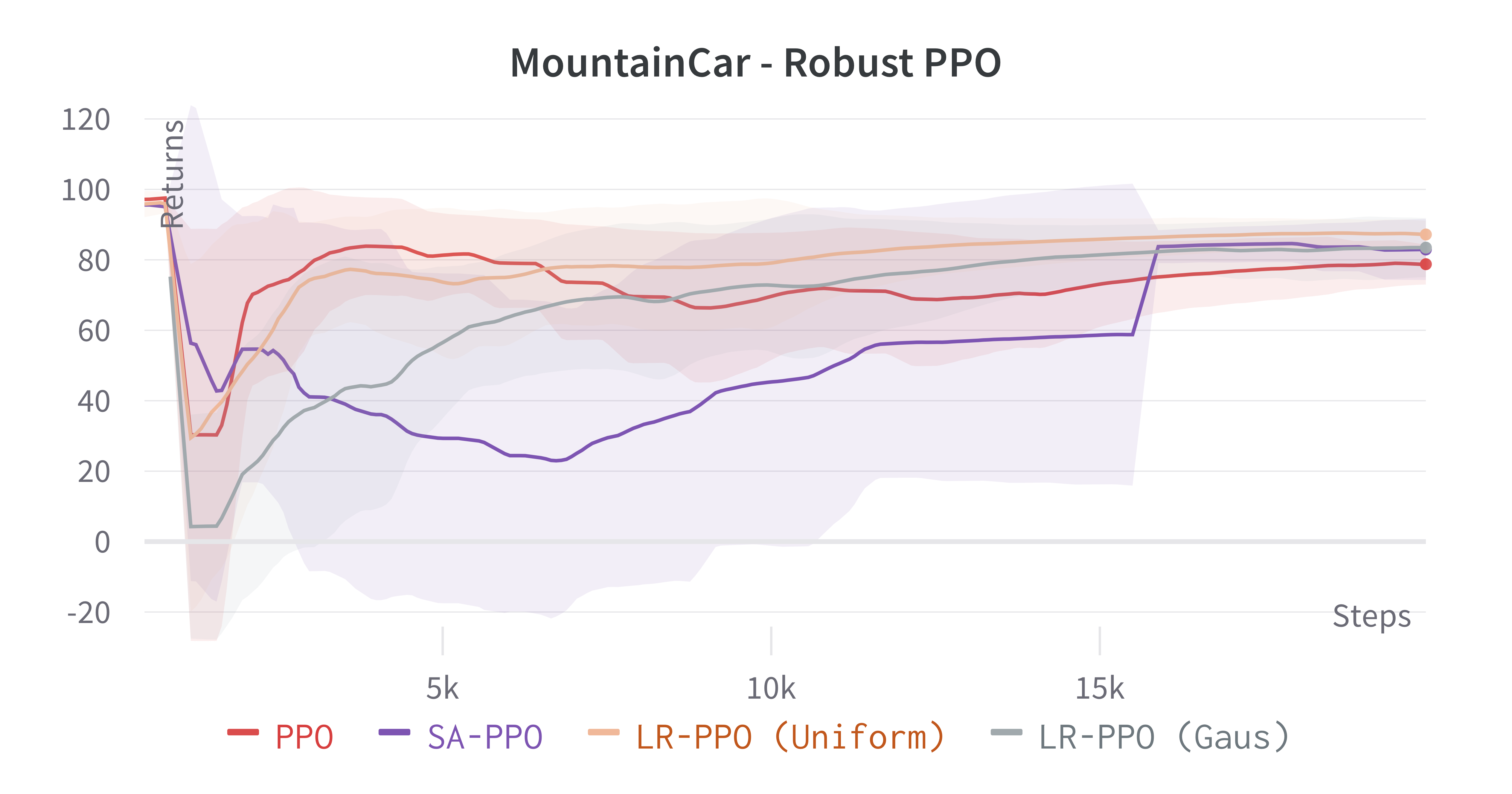}
        \includegraphics[width=0.49\linewidth]{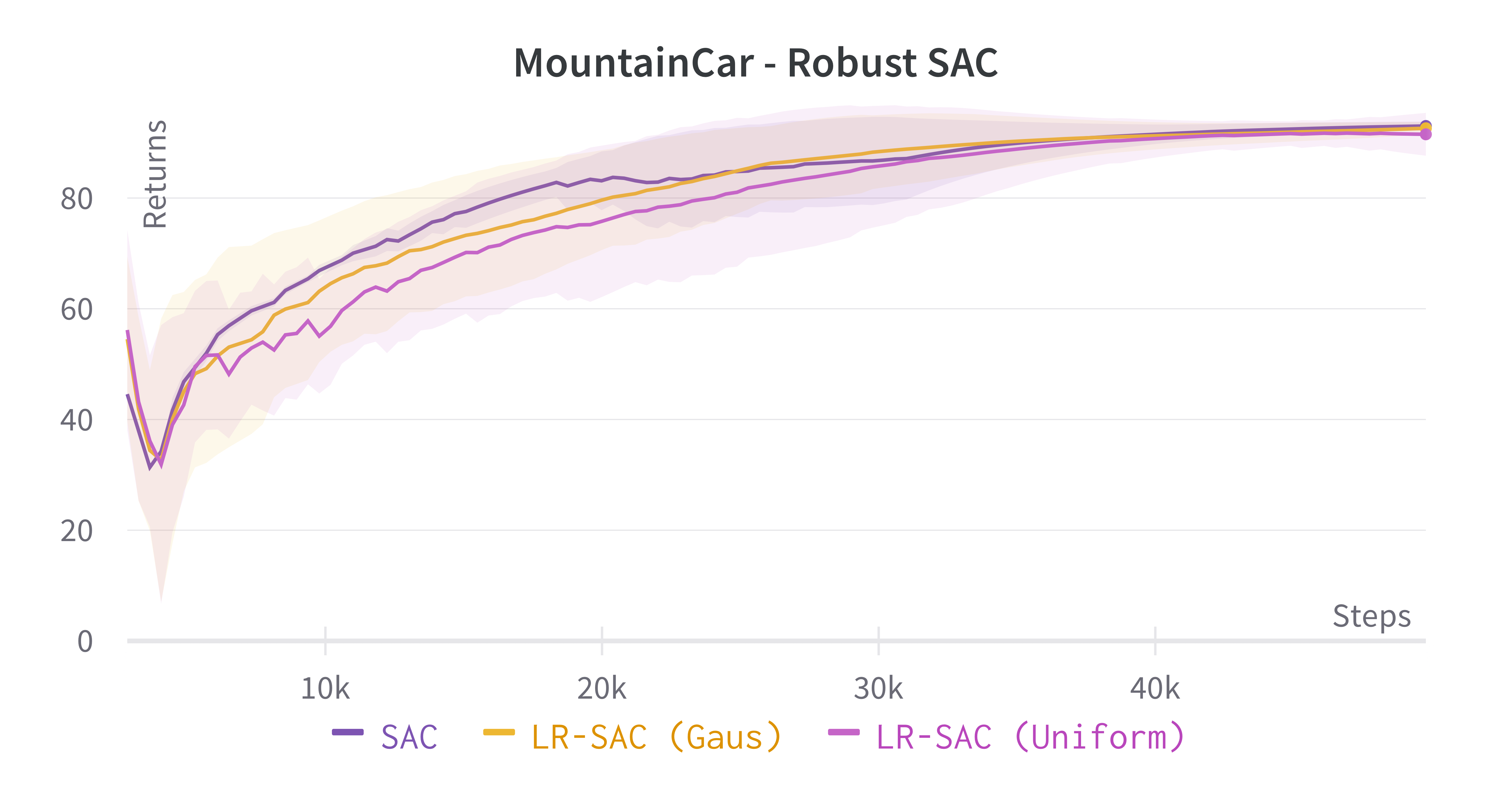}

        \includegraphics[width=0.49\linewidth]{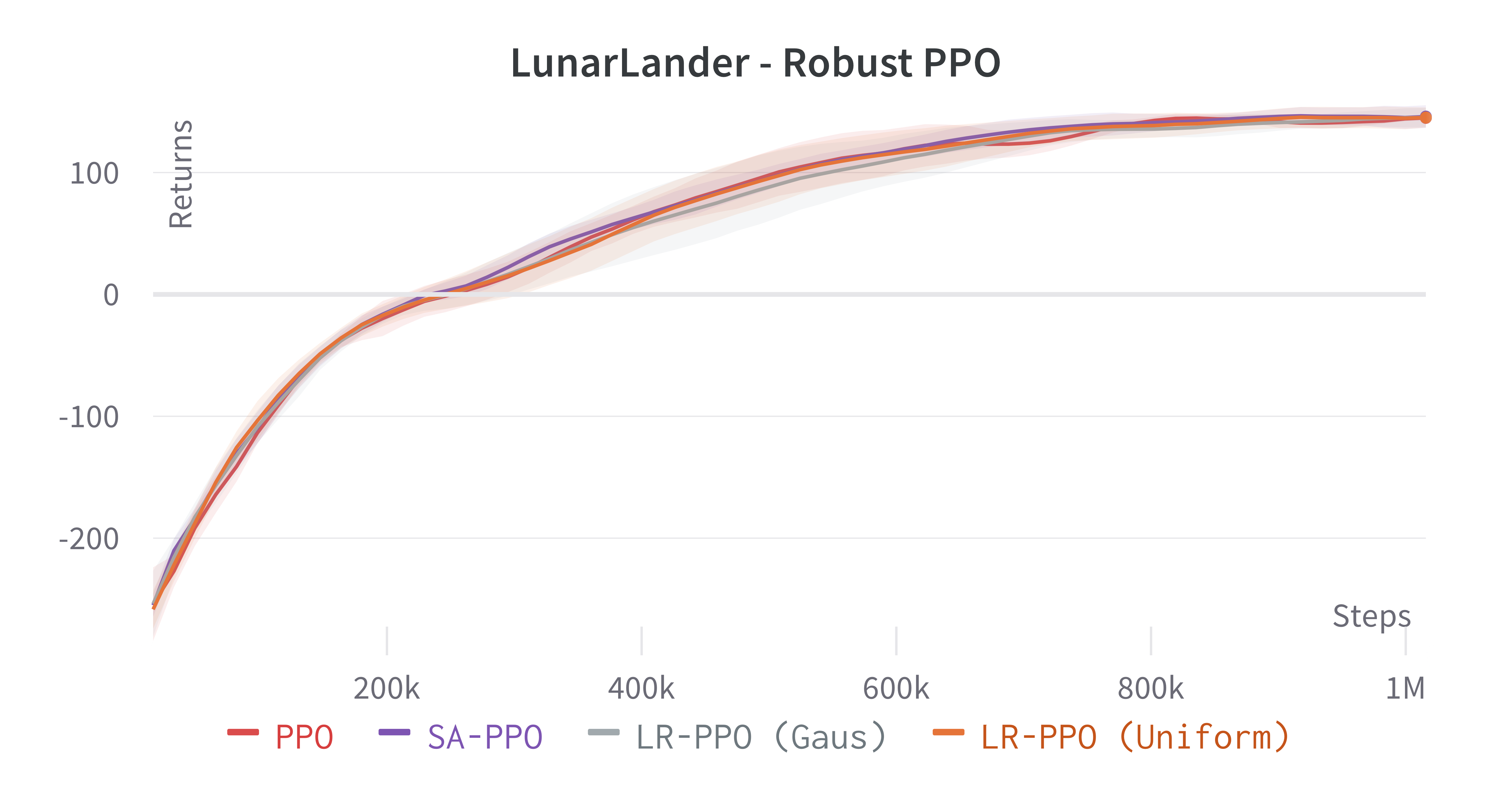}

        \includegraphics[width=0.49\linewidth]{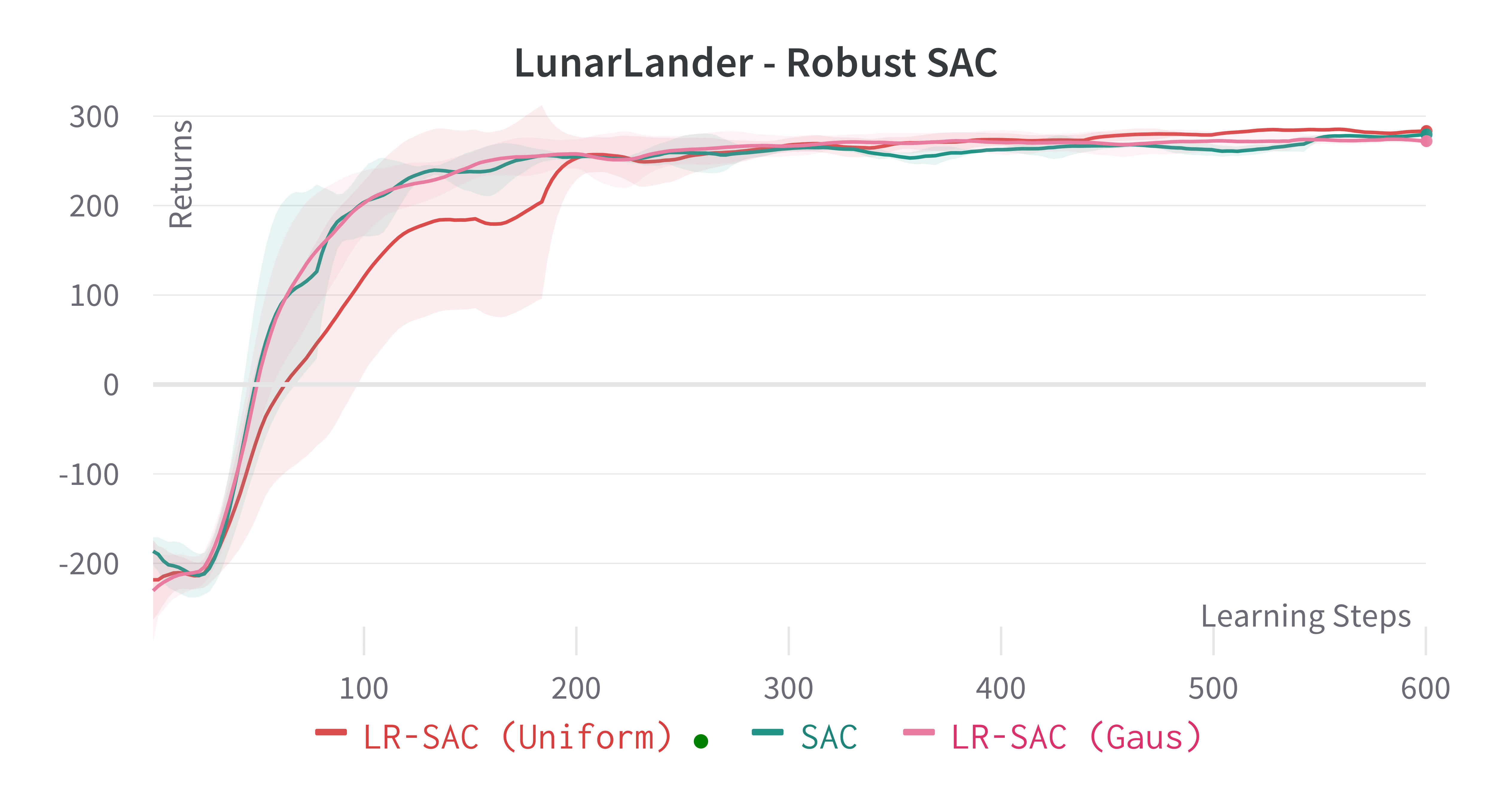}
        \includegraphics[width=0.49\linewidth]{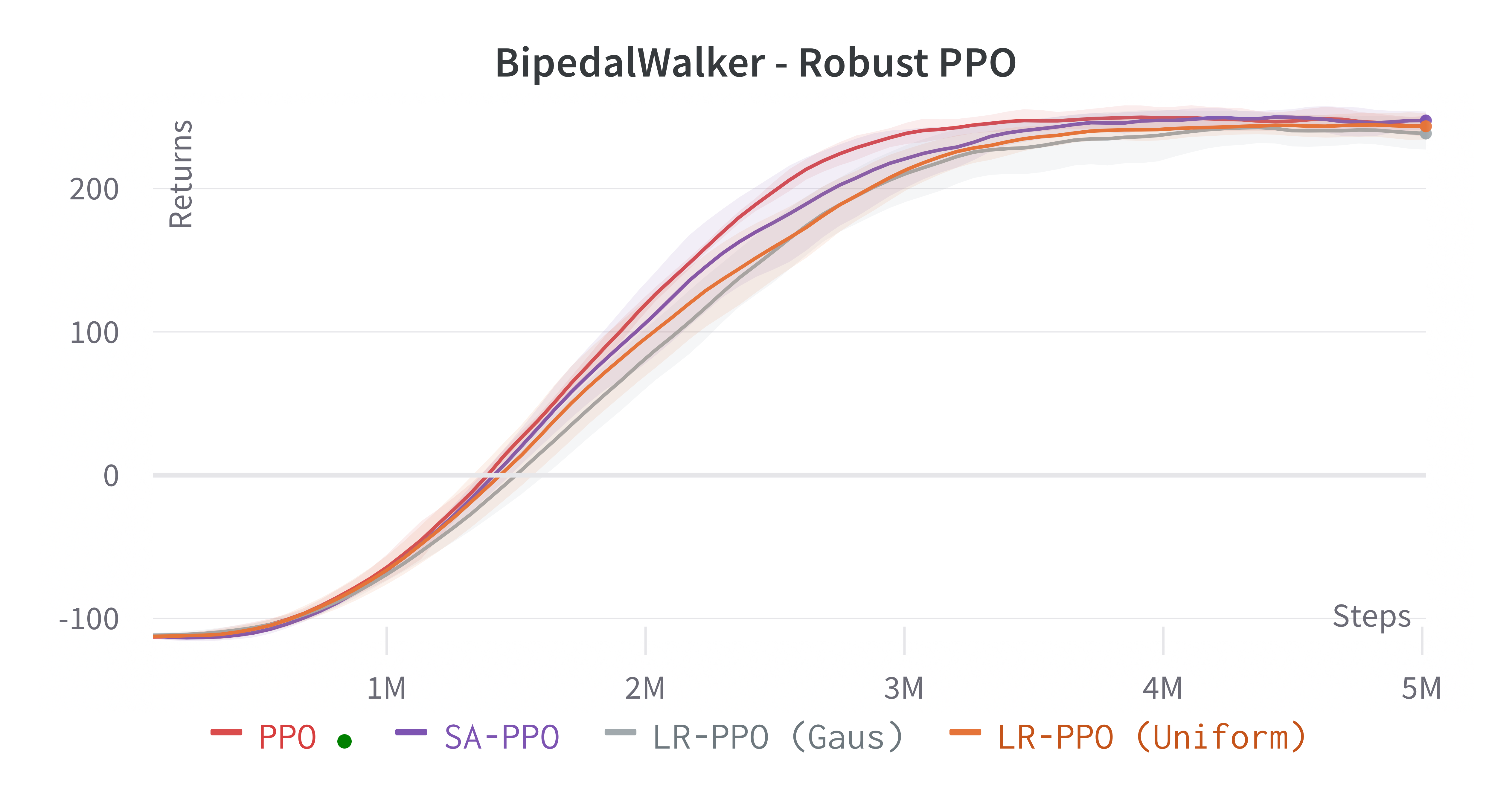}
        \includegraphics[width=0.49\linewidth]{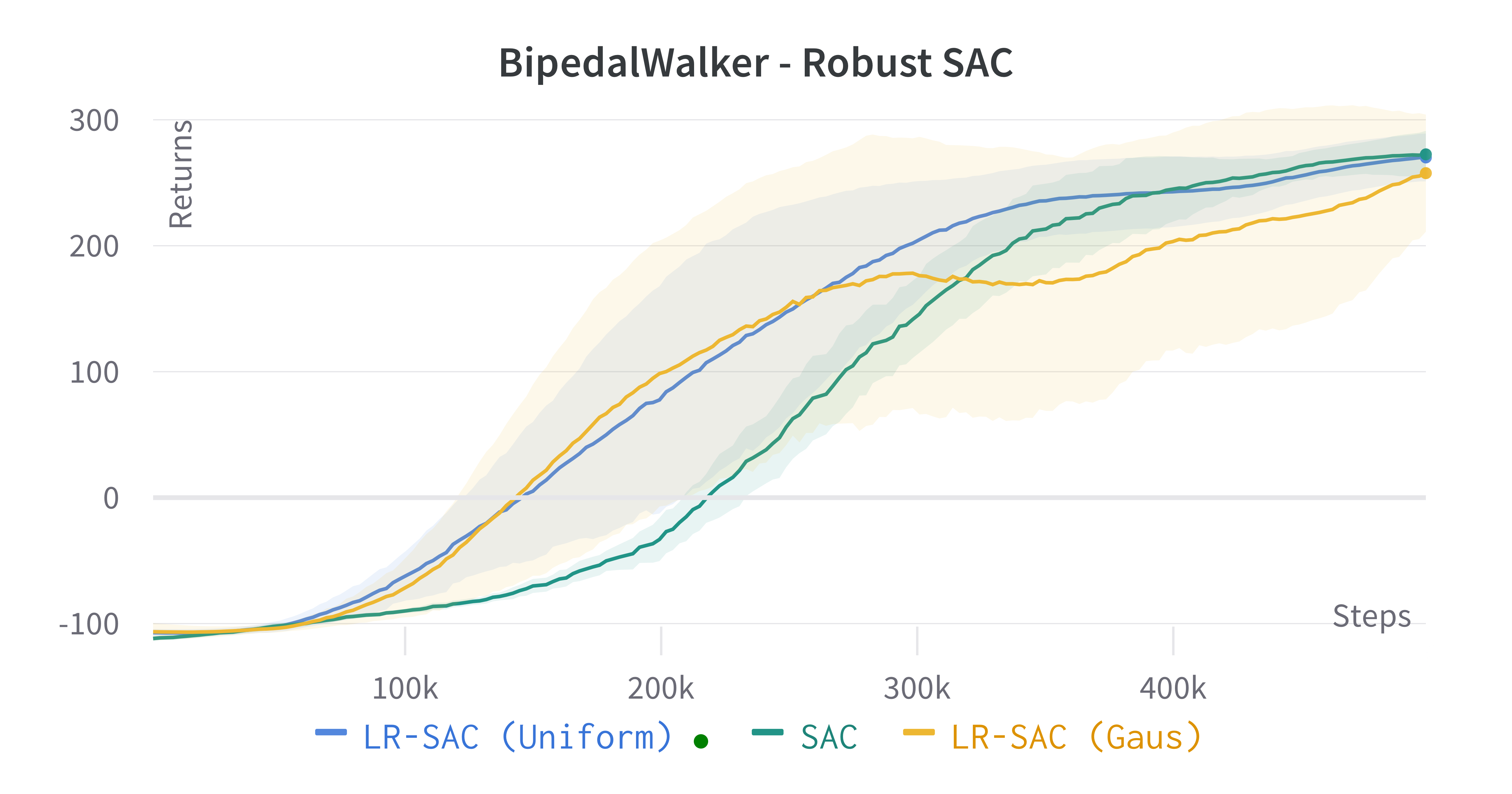}
    \caption{Learning Plots for Continuous Control Environments.}
\end{figure}
\paragraph{Learning processes} In general, learning was not severlely affected by the LRPG scheme. However, it was shown to induce a larger variance in the trajectories observed, as seen in LunarLander with LR-SAC and BipedalWalker with LR-SAC.
\end{document}